\newtheorem{thm}{Theorem}
\newtheorem{lem}{Lemma}
\theoremstyle{definition}
\newtheorem{defn}{Definition}
\newtheorem{example}{Example}
\newcommand \reals {\mathbb{R}}
\newcommand \cX {\mathcal{X}}
\newcommand \cY {\mathcal{Y}}
\newcommand \cH {\mathcal{H}}
\newcommand \cG {\mathcal{G}}
\newcommand \prob {\operatorname*{Pr}}
\newcommand \expect {\operatorname*{\mathbb{E}}}
\newcommand \ind [1]{\mathbb{I}\{#1\}}
\newcommand \norm [1]{\Vert#1\Vert}
\newcommand \resto {\bigl\vert}
\newcommand \dist {\operatorname{dist}}
\newcommand \NN {\operatorname{NN}}
\newcommand \tp {\top}
\newcommand \weights {\eta}
\title{Envy-Free Classification}
\author{Submission ID {\color{red}\#XXX}}
\author{%
  Maria-Florina Balcan \\
  Machine Learning Department\\
  Carnegie Mellon University\\
  \texttt{ninamf@cs.cmu.edu}\\
  \And
  Travis Dick\\
  Computer and Information Science\\
  University of Pennsylvania\\
  \texttt{tbd@seas.upenn.edu}\\
  \AND
  Ritesh Noothigattu\\
  Machine Learning Department\\
  Carnegie Mellon University\\
  \texttt{riteshn@cmu.edu}\\
  \And
  Ariel D. Procaccia\\
  Computer Science Department\\
  Harvard University\\
  \texttt{arielpro@seas.harvard.edu}\\
}
\begin{document}

\maketitle

\begin{abstract}
In classic fair division problems such as cake cutting and rent division, \emph{envy-freeness} requires that each individual (weakly) prefer his allocation to anyone else's. On a conceptual level, we argue that envy-freeness also provides a compelling notion of fairness for classification tasks, especially when individuals have heterogeneous preferences. Our technical focus is the \emph{generalizability} of envy-free classification, i.e., understanding whether a classifier that is envy free on a sample would be almost envy free with respect to the underlying distribution with high probability. Our main result establishes that a small sample is sufficient to achieve such guarantees, when the classifier in question is a mixture of deterministic classifiers that belong to a family of low Natarajan dimension.
\end{abstract}

\section{Introduction} \label{sec:intro}

The study of fairness in machine learning is driven by an abundance of examples
where learning algorithms were perceived as discriminating against protected
groups~\citep{Sween13,DTD15}. Addressing this problem requires a conceptual ---
perhaps even philosophical --- understanding of what fairness means in this
context. In other words, the million dollar question is
(arguably\footnote{Certain papers take a somewhat different
view~\citep{KRPH+17}.}) this: What are the formal constraints that fairness
imposes on learning algorithms?

In this paper, we propose a new measure of algorithmic fairness. It draws on an extensive body of work on
rigorous approaches to fairness, which --- modulo one possible exception (see
Section~\ref{subsec:rel}) --- has not been tapped by machine learning
researchers: the literature on \emph{fair division}~\citep{BT96,Moul03}. The
most prominent notion is that of \emph{envy-freeness}~\citep{Fol67,Var74},
which, in the context of the allocation of goods, requires that the utility of
each individual for his allocation be at least as high as his utility for the
allocation of any other individual; for six decades, it has been the gold standard of fairness for
problems such as cake cutting \citep{RW98,Pro13} and rent division
\citep{Su99,GMPZ17}. In the classification setting, envy-freeness would simply
mean that the utility of each individual for his distribution over outcomes is
at least as high as his utility for the distribution over outcomes assigned to
any other individual.

It is important to say upfront that envy-freeness is \emph{not} suitable for several widely-studied problems where there are only two possible outcomes, one of which is `good' and the other `bad'; examples include predicting whether an individual would default on a loan, and whether an offender would recidivate. In these degenerate cases, envy-freeness would require that the classifier assign each and every individual the exact same probability of obtaining the `good' outcome, which, clearly, is not a reasonable constraint.

By contrast, we are interested in situations where there is a diverse set of possible outcomes, and individuals have diverse preferences for those outcomes. For example, consider a system responsible for displaying credit card
advertisements to individuals. There are many credit cards with different
eligibility requirements, annual rates, and reward programs. An individual's
utility for seeing a card's advertisement will depend on his eligibility,
his benefit from the rewards programs, and potentially other factors. It may
well be the case that an envy-free advertisement assignment shows Bob
advertisements for a card with worse annual rates than those shown to Alice; this outcome is not unfair if Bob is genuinely more interested in the card
offered to him. Such rich utility functions are also evident in the context of
job advertisements~\citep{DTD15}: people generally want higher paying jobs, but
would presumably have higher utility for seeing advertisements for jobs that
better fit their qualifications and interests.

A second appealing property of envy-freeness is that its fairness guarantee
binds at the level of individuals. Fairness notions can be coarsely
characterized as being either individual notions, or group notions, depending on
whether they provide guarantees to specific individuals, or only on average to a
protected subgroup. The majority of work on fairness in machine learning focuses
on group fairness~\citep{LRT11,DHPR+12,ZWSP+13,MPS16,JKMA16,ZVGG+17}.


There is, however, one well-known example of individual fairness: the influential fair
classification model of \citet{DHPR+12}. The model involves a set of individuals
and a set of outcomes. The centerpiece of the model is a \emph{similarity
metric} on the space of individuals; it is specific to the classification task
at hand, and ideally captures the ethical ground truth about relevant
attributes. For example, a man and a woman who are similar in every other way
should be considered similar for the purpose of credit card offerings, but
perhaps not for lingerie advertisements. Assuming such a metric is available,
fairness can be naturally formalized as a Lipschitz constraint, which requires
that individuals who are close according to the similarity metric be mapped to
distributions over outcomes that are close according to some standard metric
(such as total variation).

As attractive as this model is, it has one clear weakness from a practical
viewpoint: the availability of a similarity metric. \citet{DHPR+12} are well
aware of this issue; they write that justifying this assumption is ``one of the
most challenging aspects'' of their approach. They add that ``in reality the
metric used will most likely only be society's current best approximation to the
truth.'' But, despite recent progress on automating ethical decisions in certain
domains~\citep{NGAD+18,FSSD+18}, the task-specific nature of the similarity
metric makes even a credible approximation thereof seem unrealistic. In
particular, if one wanted to learn a similarity metric, it is unclear what type
of examples a relevant dataset would consist of.

In place of a metric, envy-freeness requires access to individuals' utility
functions, but --- by contrast --- we do not view this assumption as a
barrier to implementation. Indeed, there are a variety of techniques for
learning utility functions~\citep{CKO01,NJ04,BCIW12}. Moreover, in our running
example of advertising, one can use standard measures like expected
click-through rate (CTR) as a good proxy for utility.

It is worth noting that the classification setting is different from classic
fair division problems in that the ``goods'' (outcomes) are non-excludable. In
fact, one envy-free solution simply assigns each individual to his favorite
outcome. But this solution may be severely suboptimal according to another (standard) component of our setting, the \emph{loss function}, which, in the examples above, might represent the expected revenue from showing an ad to an individual. Typically the loss function is not perfectly aligned with individual utilities, and, therefore, it may
be possible to achieve smaller loss than the na\"ive solution without violating the envy-freeness
constraint.

In summary, we view envy-freeness as a compelling, well-established, and,
importantly, practicable notion of individual fairness for classification tasks
with a diverse set of outcomes when individuals have heterogeneous preferences.
Our goal is to understand its learning-theoretic properties.

\vspace{-0.1in}

\subsection{Our Results}

The challenge is that the space of individuals is potentially huge, yet we seek to provide universal envy-freeness guarantees. To this end, we are given a sample consisting of individuals drawn from an unknown distribution. We are interested in learning algorithms that minimize loss, subject to satisfying the envy-freeness constraint, \emph{on the sample}. Our primary technical question is that of generalizability, that is, \emph{given a classifier that is envy free on a sample, is it approximately envy free on the underlying distribution?} Surprisingly, \citet{DHPR+12} do not study generalizability in their model, and we are aware of only one subsequent paper that takes a learning-theoretic viewpoint on individual fairness and gives theoretical guarantees (see Section~\ref{subsec:rel}). 

In Section~\ref{sec:arb}, we do not constrain the classifier. Therefore, we need some strategy to extend a classifier that is defined on a sample; assigning an individual the same outcome as his \emph{nearest neighbor} in the sample is a popular choice. However, we show that \emph{any} strategy for extending a classifier from a sample, on which it is envy free, to the entire set of individuals is unlikely to be approximately envy free on the distribution, unless the sample is exponentially large.

For this reason, in Section~\ref{sec:mixtures}, we focus on structured families of classifiers. On a high level, our goal is to relate the combinatorial richness of the family to generalization guarantees. One obstacle is that standard notions of dimension do not extend to the analysis of randomized classifiers, whose range is \emph{distributions} over outcomes (equivalently, real vectors). We circumvent this obstacle by considering mixtures of \emph{deterministic} classifiers that belong to a family of bounded Natarajan dimension (an extension of the well-known VC dimension to multi-class classification). Our main theoretical result asserts that, under this assumption, envy-freeness on a sample does generalize to the underlying distribution, even if the sample is relatively small (its size grows almost linearly in the Natarajan dimension).

Finally, in Section~\ref{sec:expts}, we design and implement an algorithm that learns (almost) envy-free mixtures of linear one-vs-all classifiers. We present empirical results that validate our computational approach, and indicate good generalization properties even when the sample size is small.

\subsection{Related Work}
\label{subsec:rel}

Conceptually, our work is most closely related to work by \citet{ZVGG+17}. They are interested in group notions of fairness, and advocate preference-based notions instead of parity-based notions. In particular, they assume that each group has a utility function for \emph{classifiers}, and define the \emph{preferred treatment} property, which requires that the utility of each group for its own classifier be at least its utility for the classifier assigned to any other group. Their model and results focus on the case of binary classification where there is a desirable outcome and an undesirable outcome, so the utility of a group for a classifier is simply the fraction of its members that are mapped to the desirable outcome. Although, at first glance, this notion seems similar to envy-freeness, it is actually fundamentally different.\footnote{On a philosophical level, the fair division literature deals exclusively with individual notions of fairness. In fact, even in group-based extensions of envy-freeness~\citep{MS17} the allocation is shared by groups, but individuals must not be envious. We subscribe to the view that group-oriented notions (such as statistical parity) are objectionable, because the outcome can be patently unfair to individuals.}
Our paper is also completely different from that of \citeauthor{ZVGG+17}~in terms of technical results; theirs are purely empirical in nature, and focus on the increase in accuracy obtained when parity-based notions of fairness are replaced with preference-based ones.

Concurrent work by \citet{RY18} provides generalization guarantees for the metric notion of individual fairness introduced by \citet{DHPR+12}, or, more precisely, for an approximate version thereof. 
There are two main differences compared to our work: first, we propose envy-freeness as an alternative notion of fairness that circumvents the need for a similarity metric. Second, they focus on randomized \emph{binary} classification, which amounts to learning a real-valued function, and so are able to make use of standard Rademacher complexity results to show generalization. By contrast, standard tools do not directly apply in our setting. It is worth noting that several other papers provide generalization guarantees for notions of group fairness, but these are more distantly related to our work \citep{ZWSP+13,WGOS17,DOBS+18,KNRW18,HKRR18}.

\section{The Model} \label{sec:problem}

We assume that there is
a space $\cX$ of individuals, a finite space $\cY$ of outcomes, and a utility function $u : \cX \times \cY \to [0,1]$ encoding the preferences
of each individual for the outcomes in $\cY$. In the advertising example, individuals are users, outcomes are advertisements, and the utility function reflects the benefit an individual derives from being shown a particular advertisement. For any distribution $p \in \Delta(\cY)$ (where $\Delta(\cY)$ is the set of distributions over $\cY$) we let
$u(x,p) = \expect_{y \sim p}[u(x,y)]$ denote individual $x$'s expected utility
for an outcome sampled from $p$. We refer to a function $h :
\cX \to \Delta(\cY)$ as a \emph{classifier}, even though it can return a distribution over outcomes.

\subsection{Envy-Freeness}

Roughly speaking, a classifier $h :
\cX \to \Delta(\cY)$ is envy free if no individual prefers the
outcome distribution of someone else over his own.

\begin{defn}
	A classifier $h : \cX \to \Delta(\cY)$ is \emph{envy free (EF)} on a set $S$ of
	individuals if $u(x,h(x)) \geq u(x,h(x'))$ for all $x,x' \in S$. Similarly,
	$h$ is \emph{$(\alpha,\beta)$-EF} with respect to a distribution $P$ on $\cX$ if
	\[\prob_{x,x'\sim P}\bigl( u(x,h(x)) < u(x,h(x')) - \beta \bigr) \leq \alpha.\]
	Finally, $h$ is \emph{$(\alpha,\beta)$-pairwise EF} on a set of pairs of
	individuals $S = \{(x_i,x'_i)\}_{i=1}^n$ if \[\frac{1}{n} \sum_{i=1}^n
	\ind{u(x_i, h(x_i)) < u(x_i, h(x'_i)) - \beta} \leq \alpha.\]
\end{defn}

Any classifier that is EF on a sample $S$ of individuals is also
$(\alpha,\beta)$-pairwise EF on any pairing of the individuals in $S$, for any $\alpha \geq 0$ and $\beta \geq 0$. The
weaker pairwise EF condition is all that is required for our
generalization guarantees to hold.

\subsection{Optimization and Learning}

Our formal learning problem can be stated as follows. Given sample access to an
unknown distribution $P$ over individuals $\cX$ and their utility functions, and a known loss function $\ell : \cX \times \cY \to
[0,1]$, find a classifier $h : \cX \to \Delta(\cY)$ that is
$(\alpha,\beta)$-EF with respect to $P$ minimizing expected loss $\expect_{x \sim
	P}[\ell(x, h(x))]$, where for $x \in \cX$ and $p \in \Delta(\cY)$, $\ell(x,p) =
\expect_{y \sim p}[\ell(x,y)]$.

We follow the empirical risk minimization (ERM)
learning approach, i.e., we collect a sample of individuals drawn i.i.d~from
$P$ and find an EF classifier with low loss on the sample. Formally, given a sample of individuals $S=\{x_1,\ldots,x_n\}$ and their utility functions $u_{x_i}(\cdot)=u(x_i,\cdot)$, we are interested in a classifier $h:S\to\Delta(\cY)$ that minimizes $\sum_{i=1}^n \ell(x_i,h(x_i))$ among all classifiers that are EF on $S$.


Recall that we consider randomized classifiers that can assign a distribution over outcomes to each of the individuals. However, one might wonder whether the EF classifier that minimizes loss on a sample happens to always be deterministic. Or, at least, the optimal deterministic classifier on the sample might incur a loss that is very close to that of the optimal randomized classifier. If this were true, we could restrict ourselves to classifiers of the form $h: \cX \to \cY$, which would be much easier to analyze. Unfortunately, it turns out that this is not the case. In fact, there could be an arbitrary (multiplicative) gap between the optimal randomized EF classifier and the optimal deterministic EF classifier. The intuition behind this is as follows. A deterministic classifier that has very low loss on the sample, but is not EF, would be completely discarded in the deterministic setting. On the other hand, a randomized classifier could take this loss-minimizing deterministic classifier and mix it with a classifier with high ``negative envy'', so that the mixture ends up being EF and at the same time has low loss. This is made concrete in the following example.

\begin{example}	\label{ex:det-random-gap}
	Let $S = \{x_1, x_2\}$ and $\cY = \{y_1, y_2, y_3\}$. Let the loss function be such that
	\begin{align*}
	&\ell(x_1, y_1) = 0 \qquad \ell(x_1, y_2) = 1 \qquad \ell(x_1, y_3) = 1\\
	&\ell(x_2, y_1) = 1 \qquad \ell(x_2, y_2) = 1 \qquad \ell(x_2, y_3) = 0
	\end{align*}
	Moreover, let the utility function be such that
	\begin{align*}
	&u(x_1, y_1) = 0 \qquad u(x_1, y_2) = 1 \qquad u(x_1, y_3) = \frac{1}{\gamma}\\
	&u(x_2, y_1) = 0 \qquad u(x_2, y_2) = 0 \qquad u(x_2, y_3) = 1
	\end{align*}
	where $\gamma > 1$. The only deterministic classifier with a loss of $0$ is $h_0$ such that $h_0(x_1) = y_1$ and $h_0(x_2) = y_3$. But, this is not EF, since $u(x_1, y_1) < u(x_1, y_3)$. Furthermore, every other deterministic classifier has a total loss of at least $1$, causing the optimal deterministic EF classifier to have loss of at least $1$.

	To show that randomized classifiers can do much better, consider the randomized classifier $h_*$ such that $h_*(x_1) = \left(1-1/\gamma, 1/\gamma, 0\right)$ and $h_*(x_2) = \left(0, 0, 1\right)$. This classifier can be seen as a mixture of the classifier $h_0$ of $0$ loss, and the deterministic classifier $h_e$, where $h_e(x_1) = y_2$ and $h_e(x_2) = y_3$, which has high ``negative envy". One can observe that this classifier $h_*$ is EF, and has a loss of just $1/\gamma$. Hence, the loss of the optimal randomized EF classifier is $\gamma$ times smaller than the loss of the optimal deterministic one, for any $\gamma > 1$.
\end{example}

%
%

\section{Arbitrary Classifiers}\label{sec:arb}

An important (and typical) aspect of our learning problem is that the classifier
$h$ needs to provide an outcome distribution for every individual, not just
those in the sample. For example, if $h$ chooses advertisements for visitors of
a website, the classifier should still apply when a new visitor arrives.
Moreover, when we use the classifier for new individuals, it must continue to be
EF. In this section, we consider two-stage approaches that first choose outcome
distributions for the individuals in the sample, and then extend those decisions
to the rest of $\cX$.

In more detail, we are given a sample $S = \{x_1, \dots, x_n\}$ of individuals
and a classifier $h : S \to \Delta(\cY)$ assigning outcome distributions to each
individual. Our goal is to extend these assignments to a classifier
$\overline{h} : \cX \to \Delta(\cY)$ that can be applied to new individuals as
well. For example, $h$ could be the loss-minimizing EF classifier on the sample
$S$.

For this section, we
assume that $\cX$ is equipped with a distance metric $d$. Moreover, we assume in this section that the utility function $u$ is $L$-Lipschitz on $\cX$. That is, for
every $y \in \cY$ and for all $x, x' \in \cX$, we have  $|u(x, y) - u(x', y)|
\leq L \cdot d(x,x')$.

Under the foregoing assumptions, one natural way to extend the classifier on the sample
to all of $\cX$ is to assign new individuals the same outcome distribution
as their nearest neighbor in the sample. Formally, for a set $S \subset \cX$ and any individual
$x \in \cX$, let $\NN_S(x)\in \text{arg\,min}_{x'\in S}d(x,x')$ denote the nearest neighbor of $x$ in $S$ with
respect to the metric $d$ (breaking ties arbitrarily). The following simple result (whose proof is relegated to Appendix~\ref{app:arb}) establishes that this approach preserves envy-freeness in cases where
the sample is exponentially large.

\begin{thm} \label{thm:nn-upper-bound}
	Let $d$ be a metric on $\cX$, $P$ be a distribution on $\cX$, and $u$ be
	an $L$-Lipschitz utility function. Let $S$ be a set of individuals such that
	there exists $\hat \cX \subset \cX$ with $P(\hat \cX) \geq 1-\alpha$ and
	$\sup_{x \in \hat \cX} d(x, \NN_S(x)) \leq \beta/(2L)$. Then for any
	classifier $h : S \to \Delta(\cY)$ that is EF  on $S$, the extension
	$\overline{h} : \cX \to \Delta(\cY)$ given by $\overline{h}(x) = h(\NN_S(x))$
	is $(\alpha, \beta)$-EF on $P$.
\end{thm}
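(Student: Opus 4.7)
The plan is to show that the $\beta$-envy event can only occur when the first individual $x$ lies outside the ``well-covered'' set $\hat\cX$, so that a single-sided inclusion (not a union bound over both $x$ and $x'$) already yields probability at most $\alpha$. Concretely, I would fix an arbitrary pair $(x,x')$ with $x\in\hat\cX$ and $x'$ unrestricted, let $x^\ast=\NN_S(x)$ and $x'^\ast=\NN_S(x')$, and show that $u(x,\overline h(x))\ge u(x,\overline h(x'))-\beta$.

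The argument is a three-step chain that ``bounces'' via the sample. First, lift the pointwise Lipschitz assumption on $u(\cdot,y)$ to a Lipschitz estimate on distributions: for any $p\in\Delta(\cY)$, $|u(x,p)-u(x^\ast,p)|\le L\cdot d(x,x^\ast)$, which is immediate from linearity of expectation. Using $\overline h(x)=h(x^\ast)$, this gives $u(x,h(x^\ast))\ge u(x^\ast,h(x^\ast))-L\cdot d(x,x^\ast)$. Second, apply the on-sample envy-freeness of $h$ at the pair $(x^\ast,x'^\ast)\in S\times S$ to get $u(x^\ast,h(x^\ast))\ge u(x^\ast,h(x'^\ast))$. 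Third, apply the same lifted Lipschitz bound in the other direction to conclude $u(x^\ast,h(x'^\ast))\ge u(x,h(x'^\ast))-L\cdot d(x,x^\ast)$. Chaining these three inequalities produces a total error of $2L\cdot d(x,x^\ast)$, and since $x\in\hat\cX$ guarantees $d(x,x^\ast)\le\beta/(2L)$, the slack is at most $\beta$, exactly as required.

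Finally, I would integrate over $(x,x')\sim P\times P$: the set of pairs that can violate $\beta$-envy is contained in $(\cX\setminus\hat\cX)\times\cX$, whose measure is at most $\alpha$ by hypothesis. There is no real obstacle in the proof; the only subtle point is recognizing that only $x$ (the envious party) needs to be close to the sample, so we get the clean rate $\alpha$ rather than an avoidable $2\alpha$ from a naive union bound over both coordinates.
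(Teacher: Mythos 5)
Your proof is correct and follows essentially the same route as the paper's: a three-step chain that uses the Lipschitz property of $u$ twice (both times with the distance $d(x,\NN_S(x)) \leq \beta/(2L)$, lifted to distributions by linearity of expectation) and the on-sample envy-freeness at the pair $(\NN_S(x),\NN_S(x'))$ once, incurring total slack $\beta$. The observation that only the envious individual $x$ needs to lie in $\hat\cX$, yielding probability $\alpha$ rather than $2\alpha$, is also exactly how the paper's argument works.
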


The conditions of Theorem~\ref{thm:nn-upper-bound} require that the set of
individuals $S$ is a $\beta/(2L)$-net for at least a $(1-\alpha)$-fraction of
the mass of $P$ on $\cX$. In several natural situations, an exponentially large
sample guarantees that this occurs with high probability. For example, if $\cX$
is a subset of $\reals^q$, $d(x,x') = \norm{x-x'}_2$, and $\cX$ has diameter at
most $D$, then for any distribution $P$ on $\cX$, if $S$ is an i.i.d.~sample of size
$O(\frac{1}{\alpha}(\frac{LD\sqrt{q}}{\beta})^q(q \log \frac{LD\sqrt{q}}{\beta} +
\log \frac{1}{\delta}))$, it will satisfy the conditions of
Theorem~\ref{thm:nn-upper-bound} with probability at least $1-\delta$. This
sampling result is folklore, but, for the sake of completeness, we prove it in Lemma~\ref{lem:euclideanCover}
of Appendix~\ref{app:arb}.

However, the exponential upper bound given by the nearest neighbor strategy is as far as we can go in terms of generalizing envy-freeness from a sample (without further assumptions). Specifically, our next result establishes that \emph{any} algorithm --- even randomized --- for extending classifiers from the sample to the entire space $\cX$
requires an exponentially large sample of individuals to ensure envy-freeness on
the distribution $P$. The proof of Theorem~\ref{thm:lb} can be found in Appendix~\ref{app:arb}.

\begin{thm}
	\label{thm:lb}
	There exists a space of individuals $\cX \subset \reals^q$, and a distribution $P$ over $\cX$ such that, for every randomized algorithm $\mathcal{A}$ that extends classifiers on a sample to $\cX$, there exists an $L$-Lipschitz utility function $u$ such that, when a sample of individuals $S$ of size $n = 4^q / 2$ is drawn from $P$ without replacement, there exists an EF classifier on $S$ for which, with probability at least $1 - 2\exp(-4^q/100) - \exp(-4^q/200)$ jointly over the randomness of $\mathcal{A}$ and $S$, its extension by $\mathcal{A}$ is not $(\alpha, \beta)$-EF with respect to $P$ for any $\alpha < 1/25$ and $\beta < L/8$.
\end{thm}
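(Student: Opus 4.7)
The plan is to apply Yao's minimax principle, reducing to deterministic extension algorithms against a random utility function, and then exhibit a combinatorial obstruction: the extension algorithm cannot predict the preferences of unseen individuals.

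First, I take $\cX$ to be a maximal $1/4$-packing of the unit ball in $\reals^q$ under $\norm{\cdot}_2$; a standard volumetric argument gives $|\cX| \geq 4^q$ (truncate to exactly $4^q$ if needed). Let $P$ be uniform on $\cX$ and take $\cY = \{0, 1\}$. Draw $t : \cX \to \cY$ uniformly at random and set $u(x, y) = (L/4)\,\ind{y = t(x)}$ on $\cX$, extending to $\reals^q$ via the McShane--Whitney construction. Since any two distinct points of $\cX$ are at distance $\geq 1/4$ while utilities differ by at most $L/4$, $u$ is $L$-Lipschitz. For the adversarial EF classifier on $S$ I take $h(x) = \delta_{t(x)}$, which is trivially EF because each sample individual receives his maximum possible utility.

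Now fix a deterministic algorithm $\mathcal{A}$ and write $q_x = \overline{h}(x) \in \Delta(\cY)$ for $x \in \cX \setminus S$. Because $\mathcal{A}$ observes only $(S, h|_S, u|_S)$, each $q_x$ depends only on $(S, t|_S)$ and is independent of $t|_{\cX \setminus S}$. A direct calculation shows that $x \in \cX \setminus S$ envies $x' \in S$ by more than $\beta$ precisely when $t(x) = t(x')$ and $q_x(t(x)) < 1 - 4\beta/L$. For $\beta < L/8$ the threshold $1 - 4\beta/L$ strictly exceeds $1/2$, so at most one of $q_x(0), q_x(1)$ can meet it; hence, conditional on $(S, t|_S)$, the event $B_x := \{q_x(t(x)) < 1 - 4\beta/L\}$ has probability at least $1/2$ and the $B_x$'s are mutually independent across $x \in \cX \setminus S$.

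The remainder is a counting exercise. Let $M = |\{x \in \cX \setminus S : B_x\}|$ and let $N_0, N_1$ count the sample points with $t(x') = 0$ and $t(x') = 1$ respectively. Multiplicative Chernoff bounds give $M \geq (1-\epsilon)|\cX \setminus S|/2$ and $N_0, N_1 \geq (1-\epsilon)|S|/2$ with failure probability exponentially small in $4^q$. On this event,
\[
\prob_{x, x' \sim P}\bigl(u(x, \overline{h}(x)) < u(x, \overline{h}(x')) - \beta\bigr) \geq \frac{M \cdot \min(N_0, N_1)}{|\cX|^2} \geq \frac{(1-\epsilon)^2}{16},
\]
which strictly exceeds $1/25$ whenever $\epsilon < 1/5$ (e.g., $\epsilon = 1/10$ yields $81/1600$). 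Tuning $\epsilon$ and the Chernoff constants delivers the stated $2\exp(-4^q/100) + \exp(-4^q/200)$ failure bound. A final application of Yao's principle extracts a single realization of $t$, and hence a single utility function $u$, for which the conclusion holds with the required probability over $S$ and the internal randomness of any randomized $\mathcal{A}$.

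The main obstacle I expect is bookkeeping rather than conceptual: tuning $\epsilon$ so that both the $1/25$ threshold and the precise $1/100, 1/200$ exponents drop out of the Chernoff computations, and carefully verifying that although the EF classifier $h$ is chosen after $S$ and $u$, its restriction to $S$ reveals $t$ only on $S$, so independence of $t|_{\cX \setminus S}$ from $q_x$ is preserved. Conceptually, however, the heart of the argument is simply that $\mathcal{A}$ cannot guess $t(x)$ for unseen $x$, so a constant fraction of those $x$ are assigned a distribution they dislike, and each such $x$ envies roughly half of $S$.
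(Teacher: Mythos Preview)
Your argument is correct and follows essentially the same strategy as the paper: Yao's principle over random binary ``favorites,'' a finite support of $4^q$ well-separated points carrying an $L$-Lipschitz utility, the EF classifier that assigns each sampled point its favorite, and Hoeffding bounds to show a constant fraction of unseen points are mispredicted and hence envy a constant fraction of the sample. The only notable differences are cosmetic---you use a $1/4$-packing with McShane--Whitney extension where the paper uses cube centers with an explicit tent utility---and your $B_x$ device handles randomized extensions $q_x \in \Delta(\cY)$ a bit more transparently than the paper's partition into $Z_0, Z_1$.
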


We remark that a similar result would hold even if we sampled $S$ with replacement; we sample here without replacement purely for ease of exposition.

\section{Low-Complexity Families of Classifiers} \label{sec:mixtures}

In this section we show that (despite Theorem~\ref{thm:lb}) generalization for
envy-freeness is possible using much smaller samples of individuals, as long as
we restrict ourselves to classifiers from a family of relatively low complexity.

In more detail, two classic complexity measures are the VC-dimension~\citep{VC}
for binary classifiers, and the Natarajan
dimension~\citep{Natarajan89:Dimension} for multi-class classifiers. However, to
the best of our knowledge, there is no suitable dimension directly applicable to
functions ranging over distributions, which in our case can be seen as
$|\cY|$-dimensional real vectors. One possibility would be to restrict ourselves
to deterministic classifiers of the type $h : \cX \to \cY$, but we have
seen in Section~\ref{sec:problem} that envy-freeness is a very strong constraint
on deterministic classifiers. Instead, we will consider a family $\cH$
consisting of randomized mixtures of $m$ deterministic classifiers belonging to a
family $\cG \subset \{g : \cX \to \cY\}$ of low Natarajan dimension. This allows
us to adapt Natarajan-dimension-based generalization results to our setting
while still working with randomized classifiers. The definition and relevant properties of the Natarajan dimension are summarized in Appendix~\ref{app:natar}.

Formally,
let $\vec{g} = (g_1, \dots, g_m) \in
\cG^m$ be a vector of $m$ functions in $\cG$ and $\weights \in \Delta_m$ be a
distribution over $[m]$, where $\Delta_m = \{ p \in \reals^m \,:\, p_i \geq 0,
\sum_i p_i = 1\}$ is the $m$-dimensional probability simplex. Then consider the
function $h_{\vec{g},\weights} : \cX \to \Delta(\cY)$ with assignment
probabilities given by
$
\prob(h_{\vec{g}, \weights}(x) = y)
=
\sum_{i=1}^m \ind{g_i(x) = y} \weights_i.
$
Intuitively, for a given individual $x$, $h_{\vec{g}, \weights}$ chooses one of
the $g_i$ randomly with probability $\weights_i$, and outputs $g_i(x)$. Let
\[\cH(\cG, m) = \{ h_{\vec{g},\weights} : \cX \to \Delta(\cY) \,:\, \vec{g} \in
\cG^m, \weights \in \Delta_m\}\] be the family of classifiers that can be written
this way. Our main technical result shows that envy-freeness generalizes for
this class.

\begin{thm} \label{thm:mixtureGeneralize}
	Suppose $\cG$ is a family of deterministic classifiers of Natarajan
	dimension $d$, and let $\cH = \cH(\cG,m)$ for $m\in \mathbb{N}$. For any distribution
	$P$ over $\cX$, $\gamma>0$, and $\delta > 0$, if $S = \{(x_i, x'_i)\}_{i=1}^n$
	is an i.i.d.~sample of pairs drawn from $P$ of size \[n \geq
	O\left(\frac{1}{\gamma^2}\left(dm^2 \log \frac{dm|\cY|\log(m|\cY|/\gamma)}{\gamma} + \log
	\frac{1}{\gamma}\right)\right),\] then with probability at least $1-\delta$, every classifier $h
	\in \cH$ that is $(\alpha,\beta)$-pairwise-EF on $S$ is also
	$(\alpha+7\gamma, \beta+4\gamma)$-EF on $P$.
\end{thm}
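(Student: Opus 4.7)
For each $h \in \cH$ and each ordered pair $(x,x') \in \cX^2$, define the \emph{envy} $e_h(x,x') = u(x,h(x')) - u(x,h(x))$. The statement reduces to a uniform convergence claim: with probability at least $1-\delta$ over $S = \{(x_i,x_i')\}_{i=1}^n$, for every $h \in \cH$ and every threshold $\beta$,
\[\prob_{(x,x') \sim P^2}\bigl(e_h(x,x') > \beta\bigr) \;\leq\; \frac{1}{n}\sum_{i=1}^n \ind{e_h(x_i,x_i') > \beta - O(\gamma)} + O(\gamma).\]
My plan is to collapse $\cH$ to a finite subfamily via two separate discretizations and then apply Hoeffding's inequality together with a union bound.

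The first discretization handles the continuous mixture weight. For $h = h_{\vec g,\alpha}$, the envy expands as $e_h(x,x') = \sum_{i=1}^m \alpha_i\bigl[u(x,g_i(x')) - u(x,g_i(x))\bigr]$, and since $u \in [0,1]$, the envy is $1$-Lipschitz in $\alpha$ with respect to $\ell_1$, uniformly over $(x,x')$. So I fix a $\gamma$-net $N \subset \Delta_m$ in $\ell_1$, of size $|N| \leq (O(m)/\gamma)^m$, and replace any $h = h_{\vec g,\alpha}$ by its nearest net-approximation $\tilde h = h_{\vec g,\tilde\alpha}$. Since $|e_h - e_{\tilde h}| \leq \gamma$ pointwise, pairwise-EF of $h$ on $S$ transfers to pairwise-EF of $\tilde h$ on $S$ with a $\gamma$-shift in $\beta$, and EF of $\tilde h$ on $P$ transfers back to EF of $h$ on $P$ with another $\gamma$-shift.

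The second discretization handles the base class via Natarajan's lemma. Let $T = \{x_1,\dots,x_n,x_1',\dots,x_n'\}$, so $|T| \leq 2n$; by Lemma~\ref{lem:natarajan}, $|\cG\resto_T| \leq (2n)^d|\cY|^{2d}$, hence $|\cG^m\resto_T| \leq (2n)^{dm}|\cY|^{2dm}$. For each pair $(\vec g\resto_T,\tilde\alpha)$, the vector $(e_{\tilde h}(x_i,x_i'))_{i=1}^n$ is completely determined, and the class of threshold indicators $\bigl\{\ind{e_{\tilde h}(\cdot,\cdot) > \beta} : \beta \in \reals\bigr\}$ has VC dimension $1$, so a DKW / VC uniform-convergence argument combined with Hoeffding yields $\sup_\beta|\mathrm{emp}-\mathrm{true}| \leq \gamma$ for that pair with failure probability $O(\exp(-n\gamma^2))$. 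A union bound over the $|\cG^m\resto_T|\cdot|N|$ pairs drives the total failure probability below $\delta$ once $n \gtrsim \gamma^{-2}\bigl(dm\log(n|\cY|) + m\log(m/\gamma) + \log(1/\delta)\bigr)$, which solves to a bound of the form stated in the theorem. Composing the $O(\gamma)$ slack picked up during each reduction step (two $\beta$-shifts from the $\alpha$-net plus the uniform-convergence slack) gives the final $(\alpha+7\gamma,\beta+4\gamma)$-EF guarantee after careful accounting of constants.

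The main obstacle I expect is precisely this first reduction: combinatorial complexity measures like the Natarajan dimension are tailored to deterministic classifiers, whereas $\cH$ consists of randomized mixtures indexed by a continuous $\alpha$-parameter. The $\ell_1$-covering is the cleanest way around this, and it is also where a significant share of the sample-complexity cost is paid, since the net size $(O(m)/\gamma)^m$ is what contributes the $m\log(m/\gamma)$-type term on top of the $dm\log(n|\cY|)$ term from Natarajan. A secondary technical point is making sure the $\alpha$- and $\beta$-slacks introduced at each discretization step and in the uniform-convergence step compose to stay linear in $\gamma$ rather than amplifying.
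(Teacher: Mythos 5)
Your high-level route differs from the paper's: the paper never does a growth-function argument on the sample itself. Instead it builds a sample-\emph{independent} finite subfamily $\hat \cH \subset \cH$ (Lemma~\ref{lem:finiteCover}) by covering $\Delta_m$ with a grid and covering $\cG$ under the disagreement pseudometric with respect to $P$ --- the cover $\hat\cG = \cG\resto_{S'}$ is taken on an \emph{auxiliary} sample $S'$ and shown, via Lemma~\ref{lem:natarajanAgnostic} applied to $\cG^2$ and the probabilistic method, to satisfy $\prob_{x\sim P}(g(x)\neq \hat g(x))\leq \gamma/m$; then plain Hoeffding plus a union bound over the fixed finite class (Lemma~\ref{lem:finiteGeneralize}) finishes the job. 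Your $\alpha$-net step is fine (and in fact cleaner than the paper's, since your $\tilde h$ uses the \emph{same} $\vec g$, giving a pointwise bound $|e_h - e_{\tilde h}|\leq\gamma$), but your second step is where the argument as written breaks.

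The gap is in the sentence ``for each pair $(\vec g\resto_T,\tilde\alpha)$ \dots Hoeffding yields $\sup_\beta|\mathrm{emp}-\mathrm{true}|\leq\gamma$ \dots a union bound over the $|\cG^m\resto_T|\cdot|N|$ pairs.'' First, an equivalence class of tuples that agree on $T$ does not determine any population quantity: two tuples $\vec g, \vec g'$ with identical restrictions to $T$ can behave arbitrarily differently off-sample, so $\prob_{(x,x')\sim P^2}(e_{\tilde h}(x,x')>\beta)$ is not a function of $(\vec g\resto_T,\tilde\alpha)$ and ``true'' in your display is undefined. Second, even after picking representatives, those representatives are chosen \emph{after seeing} $S$, while Hoeffding/DKW require a hypothesis fixed independently of the sample; a union bound over a data-dependent collection of data-dependent hypotheses is exactly the fallacy that symmetrization exists to repair. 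The standard fix --- introduce a ghost sample $S''$, reduce the one-sided deviation to a two-sample discrepancy, condition on $S\cup S''$ so that $\cG^m\resto_{S\cup S''}$ (together with the $\beta$-thresholds, at most $O(n)$ per restriction) determines all empirical quantities, and apply a Hoeffding-type bound to the random swaps --- would make your route rigorous, and would actually yield a $dm\log(n|\cY|)$ dependence, improving on the paper's $dm^2$ term (which comes from Item~2 of Lemma~\ref{lem:finiteCover}). Alternatively you could adopt the paper's device of a cover that is finite and fixed before $S$ is drawn. As submitted, though, the uniform-convergence step --- the crux of the theorem --- is missing its justification.
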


The proof of Theorem~\ref{thm:mixtureGeneralize} is relegated to Appendix~\ref{app:mixtures}. In a nutshell, it consists of two steps. First,
we show that envy-freeness generalizes for finite classes. Second, we show that
$\cH(\cG,m)$ can be approximated by a finite subset.

We remark that the theorem is only effective insofar as families of classifiers of low Natarajan dimension are useful. Fortunately, several prominent families indeed have low Natarajan dimension~\citep{DSS12}, including one vs.~all, multiclass SVM, tree-based classifiers, and error correcting output codes.

\section{Implementation and Empirical Validation}
\label{sec:expts}

So far we have not directly addressed the problem of \emph{computing} the loss-minimizing envy-free classifier from a given family on a given sample of individuals. We now turn to this problem. Our goal is not to provide an end-all solution, but rather to provide evidence that computation will not be a long-term obstacle to implementing our approach.

In more detail, our computational problem is to find the loss-minimizing classifier $h$ from a given family of randomized classifiers $\cH$ that is envy free on a given a sample of individuals $S = \{x_1, \dots, x_n\}$. For this classifier $h$ to generalize to the distribution $P$, Theorem~\ref{thm:mixtureGeneralize} suggests that the family $\cH$ to use is of the form $\cH(\cG, m)$, where $\cG$ is a family of deterministic classifiers of low Natarajan dimension.

In this section, we let $\cG$ be the family of \emph{linear one-vs-all classifiers}. In particular, denoting $\cX \subset \mathbb{R}^q$, each $g \in \cG$ is parameterized by $\vec{w} = (w_1, w_2, \dots, w_{|\cY|}) \in \mathbb{R}^{|\cY| \times q}$, where
$g(x) = \text{argmax}_{y \in \cY} \left(w_y^\tp x\right)$.
This class $\cG$ has a Natarajan dimension of at most $q|\cY|$. The optimization problem to solve in this case is
\begin{align} \label{eqn:opt-orig}
&\min_{\vec{g} \in \cG^m, \weights \in \Delta_m} \quad \sum_{i=1}^n \sum_{k=1}^m \weights_k L(x_i, g_k(x_i))\notag\\
&\text{s.t.} \quad \sum_{k=1}^m \weights_k u(x_i, g_k(x_i)) \geq \sum_{k=1}^m \weights_k u(x_i, g_k(x_j)) \quad \forall (i,j) \in [n]^2.
\end{align}


\subsection{Algorithm}	\label{subsec:algo}

Observe that optimization problem~\eqref{eqn:opt-orig} is highly non-convex and non-differentiable as formulated, because of the $\text{argmax}$ computed in each of the $g_k(x_i)$. Another challenge is the combinatorial nature of the problem, as we need to find $m$ functions from $\cG$ along with their mixing weights. In designing an algorithm, therefore, we employ several tricks of the trade to achieve tractability.

\noindent\textbf{Learning the mixture components.}
We first assume predefined mixing weights $\tilde{\weights}$, and \emph{iteratively} learn mixture components based on them. Specifically, let $g_1, g_2, \dots g_{k-1}$ denote the classifiers learned so far. To compute the next component $g_k$, we solve the optimization problem~\eqref{eqn:opt-orig} with these components already in place (and assuming no future ones). This induces the following optimization problem.
\begin{align}	\label{eqn:greedy-opt}
&\min_{g_k \in \cG} \quad \sum_{i=1}^n L(x_i, g_k(x_i))\notag\\
&\text{s.t.} \quad USF_{ii}^{(k-1)} + \tilde{\weights}_k u(x_i, g_k(x_i))\geq USF_{ij}^{(k-1)} + \tilde{\weights}_k u(x_i, g_k(x_j)) \quad \forall (i,j) \in [n]^2,
\end{align}
where $USF_{ij}^{(k-1)}$ denotes the expected utility $i$ has for $j$'s assignments so far, i.e., $USF_{ij}^{(k-1)} = \sum_{c=1}^{k-1} \tilde{\weights}_c u(x_i, g_c(x_j))$.

Solving the optimization problem~\eqref{eqn:greedy-opt} is still non-trivial
because it remains non-convex and non-differentiable. To resolve this, we first soften the constraints\footnote{This may lead to solutions that are not exactly EF on the sample. Nonetheless, Theorem~\ref{thm:mixtureGeneralize} still guarantees that there should not be much additional envy on the testing data.}. Writing out the optimization problem in the form equivalent to introducing slack variables, we obtain
\begin{align}	\label{eqn:softened-opt-main}
&\min_{g_k \in \cG} \quad \sum_{i=1}^n L(x_i, g_k(x_i)) \notag\\
&\qquad\quad+ \lambda \sum_{i \neq j} \max\left(USF_{ij}^{(k-1)} + \tilde{\weights}_k u(x_i, g_k(x_j)) - USF_{ii}^{(k-1)} - \tilde{\weights}_k u(x_i, g_k(x_i)), 0\right),
\end{align}
where $\lambda$ is a parameter that defines the trade-off between loss and envy-freeness.
This optimization problem is still highly non-convex as $g_k(x_i) = \text{argmax}_{y \in \cY} w_y^\tp x_i$, where $\vec{w}$ denotes the parameters of $g_k$. To solve this, we perform a convex relaxation on several components of the objective using the fact that $w_{g_k(x_i)}^\tp x_i \geq w_{y'}^\tp x_i$ for any $y' \in \cY$. Specifically, we have
$$L(x_i, g_k(x_i)) \leq \max_{y \in \cY}\left\{L(x_i, y) + w_{y}^\tp x_i - w_{y_i}^\tp x_i\right\},$$
$$-u(x_i, g_k(x_i)) \leq \max_{y \in \cY}\left\{-u(x_i, y) + w_y^\tp x_i - w_{b_i}^\tp x_i\right\}, \text{ and}$$
$$u(x_i, g_k(x_j)) \leq \max_{y \in \cY}\left\{u(x_i, y) + w_y^\tp x_j - w_{s_i}^\tp x_j\right\},$$
where $y_i = \text{argmin}_{y \in \cY} L(x_i, y)$, $s_i = \text{argmin}_{y \in \cY} u(x_i, y)$ and $b_i = \text{argmax}_{y \in \cY} \ u(x_i, y)$.
While we provided the key steps here, complete details and the rationale behind these choices are given in Appendix~\ref{app:expts}. On a very high-level, these are inspired by multi-class SVMs. Finally, plugging these relaxations into~\eqref{eqn:softened-opt-main}, we obtain the following convex optimization problem to compute each mixture component.
\begin{align}	\label{eqn:relaxed-opt}
&\min_{
  \vec{w} \in
  \mathbb{R}^{|\cY|\times q}
}
\quad \sum_{i=1}^n \max_{y \in \cY} \left\{L(x_i, y) +  w_y^\tp x_i - w_{y_i}^\tp x_i\right\}+ \lambda \sum_{i \neq j} \max\left(USF_{ij}^{(k-1)}\right.\\
&\left.+ \tilde{\weights}_k \max_{y \in \cY}\left\{u(x_i, y) + w_y^\tp x_j - w_{s_i}^\tp x_j\right\}- USF_{ii}^{(k-1)} + \tilde{\weights}_k \max_{y \in \cY}\left\{-u(x_i, y) + w_y^\tp x_i - w_{b_i}^\tp x_i\right\}, 0\right).\notag
\end{align}

\noindent\textbf{Learning the mixing weights.}
Once the mixture components $\vec{g}$ are learned (with respect to the predefined mixing weights $\tilde{\weights}$), we perform an additional round of optimization to learn the optimal weights $\weights$ for them. This can be done via the following linear program
\begin{align} \label{eqn:alpha-opt}
&\min_{\weights \in \Delta_m, \xi \in \mathbb{R}_{\geq 0}^{n \times n}} \quad \sum_{i=1}^n \sum_{k=1}^m \weights_k L(x_i, g_k(x_i)) + \lambda \sum_{i \neq j} \xi_{ij}\notag\\
&\text{s.t.} \quad \sum_{k=1}^m \weights_k u(x_i, g_k(x_i)) \geq \sum_{k=1}^m \weights_k u(x_i, g_k(x_j)) - \xi_{ij} \quad \forall (i,j).
\end{align}

\subsection{Methodology}	\label{subsec:data-gen}

To validate our approach, we have implemented our algorithm. However, we cannot
rely on standard datasets, as we need access to both the features and the
utility functions of individuals. Hence, we rely on synthetic data. All our code is included as supplementary material.
Our experiments are
carried out on a desktop machine with 16GB memory and an Intel Xeon(R) CPU
E5-1603 v3 @ 2.80GHz$\times$4 processor. To solve convex optimization
problems, we use CVXPY \citep{cvxpy, cvxpy_rewriting}.

In our experiments, we cannot compute the optimal solution to the original optimization problem~\eqref{eqn:opt-orig}, and there are no existing methods we can use as benchmarks. Hence, we generate the dataset such that we know the optimal solution upfront.

Specifically, to generate the whole dataset (both training and test), we first generate random classifiers $\vec{g}^\star \in \cG^m$ by sampling their parameters $\vec{w}_1, \dots \vec{w}_m \sim \mathcal{N}(0,1)^{|\cY|\times q}$, and generate $\weights^\star \in \Delta_m$ by drawing uniformly random weights in $[0,1]$ and normalizing. We use $h_{\vec{g}^\star, \weights^\star}$ as the optimal solution of the dataset we generate. For each individual, we sample each feature value independently and u.a.r.~in $[0,1]$.
For each individual $x$ and outcome $y$, we set $L(x,y) = 0$ if $y \in \{g_k^\star(x) : k \in [m]\}$ and otherwise we sample $L(x,y)$ u.a.r. in $[0,1]$.
For the utility function $u$, we need to generate it such that the randomized classifier $h_{\vec{g}^\star, \weights^\star}$ is envy free on the dataset. For this, we set up a linear program and compute each of the values $u(x,y)$. Hence, $h_{\vec{g}^\star, \weights^\star}$ is envy free and has zero loss, so it is obviously the optimal solution. The dataset is split into 75\% training data (to measure the accuracy of our solution to the optimization problem) and 25\% test data (to evaluate generalizability).

For our experiments, we use the following parameters: $|\cY| = 10$, $q = 10$, $m
= 5$, and $\lambda = 10.0$. We set the predefined weights to be
$\tilde{\weights} = \left[\frac{1}{2}, \frac{1}{4}, \dots, \frac{1}{2^{m-1}},
\frac{1}{2^{m-1}}\right]$.\footnote{\textcolor{black}{The reason for using an exponential decay is
so that the subsequent classifiers learned are different from the previous ones.
Using smaller weights
might cause consecutive classifiers to be identical, thereby
`wasting' some of the components.
}} In our experiments we vary the number of individuals, and
each result is averaged over 25 runs. On each run, we generate a
new ground-truth classifier $h_{\vec{g}^*, \weights^*}$, as well as new
individuals, losses, and utilities.

\subsection{Results}

Figure~\ref{fig:times} shows the time taken to compute the mixture components $\vec{g}$ and the optimal weights $\weights$, as the number of individuals in the training data increases. As we will see shortly, even though the $\weights$ computation takes a very small fraction of the time, it can lead to non-negligible gains in terms of loss and envy.

Figure~\ref{fig:losses} shows the average loss attained on the training and test
data by the algorithm immediately after computing the mixture components, and
after the round of $\weights$ optimization. It also shows the average loss
attained (on both the training and test data) by a random allocation, which
serves as a na\"ive benchmark for calibration purposes. Recall that the optimal
assignment $h_{\vec{g}^\star, \weights^\star}$ has loss $0$.
\textcolor{black}{For both the training and testing individuals, optimizing
$\weights$ improves the loss of the learned classifer. Moreover, our algorithms
achieve low training errors for all dataset sizes, and as the dataset grows the
testing error converges to the training error.}

Figure~\ref{fig:envies} shows the average envy among pairs in the training data
and test data, where, for each pair, negative envy is replaced with $0$, to
avoid obfuscating positive envy. The graph also depicts the average envy
attained (on both the training and test data) by a random allocation.
As for the losses, optimizing $\weights$ results in lower
average envy, and as the training set grows we see the generalization gap
decrease.

In Figure~\ref{fig:envyCDF} we zoom in on the case of $100$ training individuals, and observe the empirical CDF of envy values. Interestingly, the optimal randomized classifier $h_{\vec{g}^\star, \weights^\star}$ shows lower negative envy values compared to other algorithms, but as expected has no positive envy pairs. Looking at the positive envy values, we can again see very encouraging results. In particular, for at least a $0.946$ fraction of the pairs in the train data, we obtain envy of at most $0.05$, and this generalizes to the test data, where for at least a $0.939$ fraction of the pairs, we obtain envy of at most $0.1$.

In summary, these results indicate that the algorithm described in Section~\ref{subsec:algo} solves the optimization problem~\eqref{eqn:opt-orig} for linear one-vs-all classifiers almost optimally, and that its output generalizes well even when the training set is small.

\section{Conclusion}\label{sec:conc}

In this paper we propose EF as a suitable fairness notion for learning tasks with
many outcomes over which individuals have heterogeneous preferences. We provide
generalization guarantees for a rich family of classifiers, showing that if we
find a classifier that is envy-free on a sample of individuals, it will remain
envy-free when we apply it to new individuals from the same distribution. This
result circumvents an exponential lower bound on the sample complexity suffered
by any two-stage learning algorithm that first finds an EF assignment for the
sample and then extends it to the entire space. Finally, we empirically
demonstrate that finding low-envy and low-loss classifiers is computationally
tractable. These results show that envy-freeness is a practical notion of
fairness for machine learning systems.

\begin{figure}[t]
    \centering
    \begin{minipage}{0.47\textwidth}
		\centering
		\includegraphics[width=\textwidth]{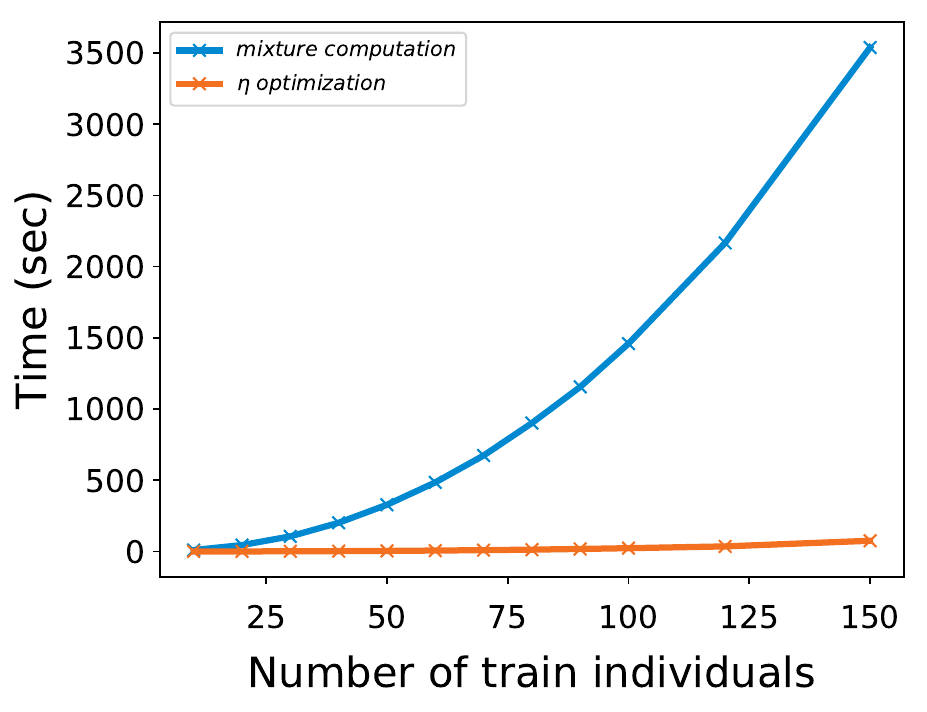}
		\caption{The algorithm's running time.}
		\label{fig:times}
    \end{minipage}\hfill
    \begin{minipage}{0.47\textwidth}
        \centering
		\includegraphics[width=\textwidth]{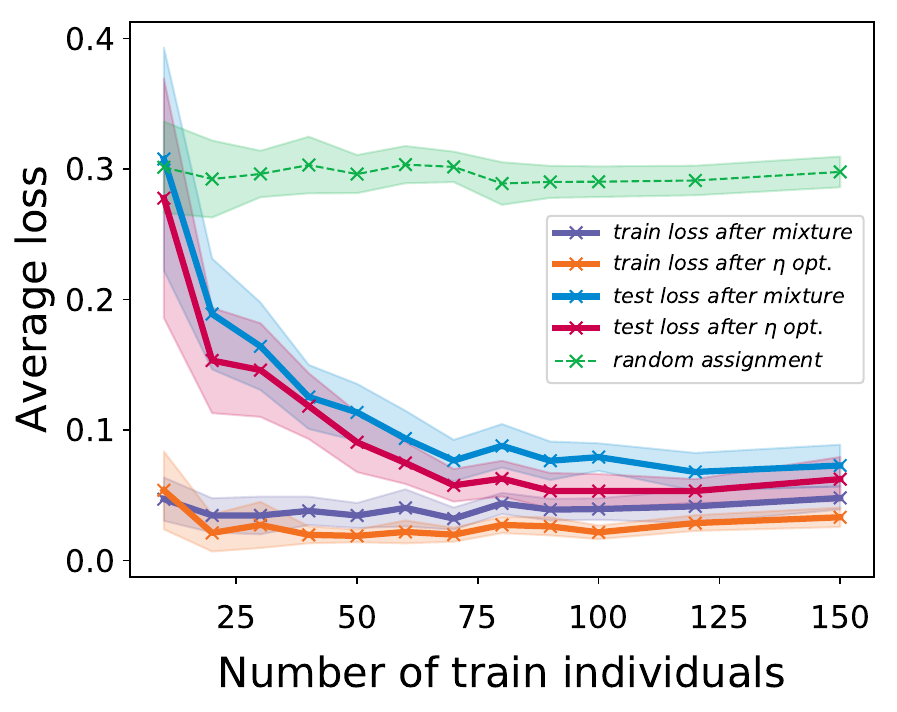}
		\caption{Training and test loss. Shaded error bands depict $95\%$ confidence intervals.
		}
		\label{fig:losses}
    \end{minipage}
\end{figure}

\begin{figure}[t]
    \centering
    \begin{minipage}{0.47\textwidth}
        \centering
        \includegraphics[width=\textwidth]{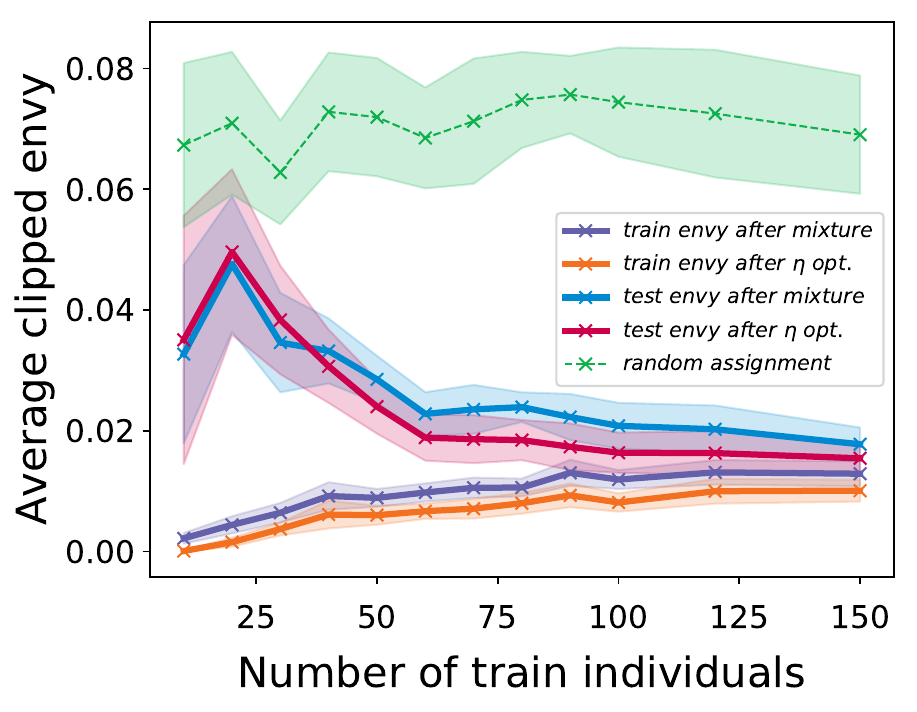}
        \caption{Training and test envy, as a function of the number of individuals.
        Shaded error bands depict $95\%$ confidence intervals.}
		\label{fig:envies}
    \end{minipage}\hfill
    \begin{minipage}{0.47\textwidth}
        \centering
        \includegraphics[width=\textwidth]{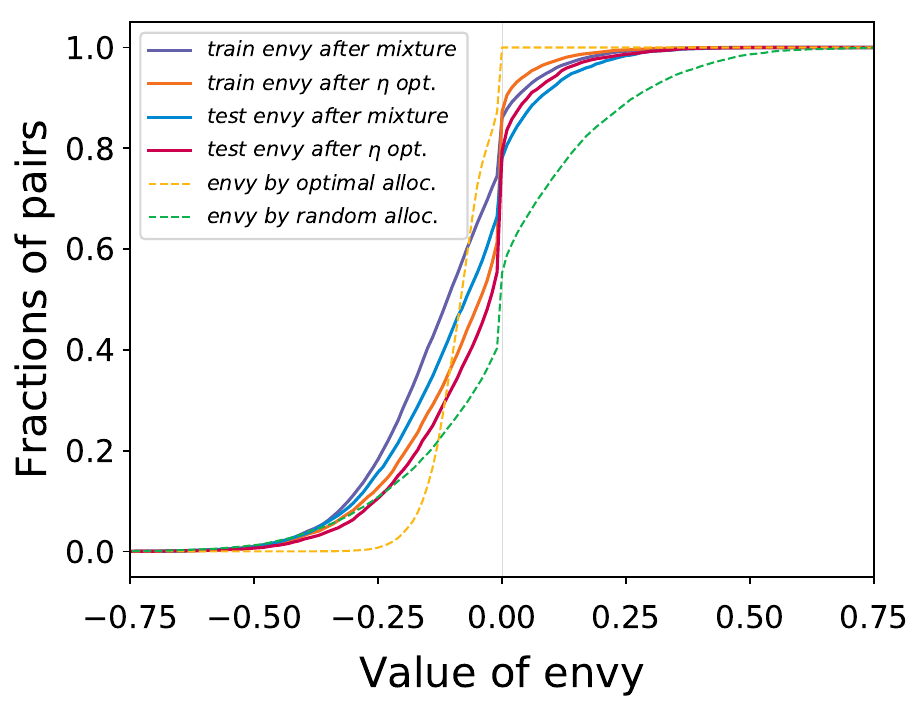}
		\caption{CDF of training and test envy for 100 training individuals}
		\label{fig:envyCDF}
    \end{minipage}
\end{figure}

\section*{Acknowledgments}
This work was partially supported by the National Science Foundation under grants IIS-1350598, IIS-1714140, IIS-1618714, IIS-1901403, CCF-1525932, CCF-1733556, CCF-1535967, CCF-1910321; by the Office of Naval Research under grants N00014-16-1-3075 and N00014-17-1-2428; and by a J.P.~Morgan AI Research Award, an Amazon Research Award, a Microsoft Research Faculty Fellowship, a Bloomberg Data Science research grant, a Guggenheim Fellowship, and a grant from the Block Center for Technology and Society.

\bibliographystyle{plainnat}
\bibliography{abb,ultimate}

\clearpage

\appendix

\begin{center}
{\Large \textbf{Appendix: Envy-Free Classification}}
\end{center}
\bigskip

\section{Natarajan Dimension Primer}
\label{app:natar}

We briefly present the Natarajan dimension. For more details, we refer the reader to
\citep{SBD14:MLBook}.

We say that a family $\cG$ \emph{multi-class shatters} a set of
points $x_1, \dots, x_n$ if there exist labels $y_1, \dots y_n$ and $y'_1,
\dots, y'_n$ such that for every $i \in [n]$ we have $y_i \neq y'_i$, and for any
subset $C \subset[n]$ there exists $g \in \cG$ such that $g(x_i) = y_i$ if $i
\in C$ and $g(x_i)=y'_i$ otherwise. The Natarajan dimension of a family $\cG$ is the
cardinality of the largest set of points that can be multi-class shattered by
$\cG$.

For example, suppose we have a feature map $\Psi : \cX \times \cY \to \reals^q$
that maps each individual-outcome pair to a $q$-dimensional feature vector, and
consider the family of functions that can be written as $g(x) =
\text{arg\,max}_{y \in \cY} w^\tp \Psi(x,y)$ for weight vectors $w \in
\reals^q$. This family has Natarajan dimension at most $q$.

For a set $S \subset \cX$ of points, we let $\cG\resto_S$ denote the restriction
of $\cG$ to $S$, which is any subset of $\cG$ of minimal size such that for
every $g \in \cG$ there exists $g' \in \cG\resto_S$ such that $g(x) = g'(x)$
for all $x \in S$. The size of $\cG\resto_S$ is the number of different
labelings of the sample $S$ achievable by functions in $\cG$. The following
Lemma is the analogue of Sauer's lemma for binary classification.

\begin{lem}[Natarajan] \label{lem:natarajan}
	For a family $\cG$ of Natarajan dimension $d$ and any subset $S \subset \cX$,
	we have $\bigl|\cG\resto_S\bigr| \leq |S|^d |\cY|^{2d}$.
\end{lem}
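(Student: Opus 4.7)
The bound is the multi-class analogue of the Sauer--Shelah lemma, so my plan is to mimic the classical proof while accounting for the extra $|\cY|^{2d}$ factor that stems from the two witness labels in each Natarajan shattering. I would follow a Haussler--Long style argument, bounding $\bigl|\cG\resto_S\bigr|$ via a notion of ``strongly shattered'' subset tailored to the Natarajan definition, and then applying a counting lemma reminiscent of Pajor's.

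First, call a subset $T \subseteq S$ \emph{strongly shattered} by $\cG\resto_S$ if there exist label assignments $(y_x)_{x \in T}$ and $(y'_x)_{x \in T}$ with $y_x \neq y'_x$ for all $x \in T$ such that every one of the $2^{|T|}$ selections of one of these two labels per point is realized by some $g \in \cG\resto_S$. By the definition of the Natarajan dimension, every strongly shattered $T$ satisfies $|T| \leq d$, so the total number of strongly shattered subsets of $S$ is at most $\sum_{i=0}^{d} \binom{|S|}{i}$, which is absorbed into $|S|^d$ in the regime relevant to the final bound (with degenerate small cases verified directly).

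Second, I would prove the multi-class analogue of Pajor's lemma by induction on $|S|$, in the form
\[\bigl|\cG\resto_S\bigr| \;\leq\; |\cY|^{2d} \cdot \#\{T \subseteq S : T \text{ is strongly shattered by } \cG\resto_S\}.\]
Remove a point $x \in S$ and partition $\cG\resto_S$ into groups that agree on $S \setminus \{x\}$. Any group of size greater than one contains at least two functions differing at $x$, producing a candidate witness pair of labels at $x$; this expansion is charged against strongly shattered subsets of $S$ that contain $x$, absorbing a $|\cY|^2$ factor per shattered point via the two-label witness structure. Combining the two steps yields $\bigl|\cG\resto_S\bigr| \leq |S|^d \cdot |\cY|^{2d}$.

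The main obstacle is the charging step: one must verify that newly created strongly shattered subsets containing $x$ are genuinely distinct (no double counting) and that the label bookkeeping contributes exactly $|\cY|^2$ per shattered coordinate rather than something larger like $|\cY|^{2|S|}$. If the direct combinatorial induction becomes delicate, a cleaner fallback is the one-versus-one projection approach: for each ordered pair of distinct labels $(y,y') \in \cY \times \cY$, any set binary-shattered by the projection of $\cG$ that distinguishes $y$ from $y'$ is Natarajan-shattered by $\cG$ with witnesses $y$ and $y'$, so the projection has VC dimension at most $d$; the binary Sauer--Shelah lemma bounds the distinct $\{0,1\}$-patterns per pair by $|S|^d$, and a recovery argument summing over the at most $|\cY|^2$ pairs reproduces the stated bound up to constants.
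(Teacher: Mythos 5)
The paper itself does not prove this lemma: it is Natarajan's classical analogue of Sauer's lemma, quoted from the literature (Natarajan 1989; Shalev-Shwartz and Ben-David 2014), so there is no in-paper argument to compare against and your sketch has to stand on its own. Your primary route (a Pajor-style count of ``strongly shattered'' sets) is indeed the standard way to prove it, but the step you yourself flag as ``the main obstacle'' is the whole content of the lemma, and the intermediate inequality you set up is not the statement the induction actually supports. The induction should be run on the count of \emph{witness-decorated} shattered sets, i.e., one proves $\bigl|\cG\resto_S\bigr| \leq \sum_{i=0}^{d}\binom{|S|}{i}\binom{|\cY|}{2}^{i}$: remove a point $x$, note that a group of behaviors agreeing on $S\setminus\{x\}$ and taking $r$ distinct values at $x$ contributes $r-1 \leq \binom{r}{2}$ behaviors beyond its restriction, charge these to the subfamilies $\cG^{\{y,y'\}}$ of behaviors on $S\setminus\{x\}$ that extend with both $y$ and $y'$ at $x$, and apply the inductive hypothesis to each such subfamily; its decorated shattered sets, augmented by $x$ with witness pair $\{y,y'\}$, are decorated shattered sets of $S$, distinct across pairs because the decoration at $x$ differs. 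The $|\cY|^{2d}$ factor arises precisely because a witness \emph{pair is chosen per shattered coordinate}; your factored inequality follows from the decorated count (since shattered sets have size at most $d$), but I do not see a direct induction for the factored form, and you leave exactly that bookkeeping open. So as written this is a plan with the decisive step missing, though it is fixable along the lines above.

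The fallback, however, is not a safe fallback --- it fails on two counts. First, binary shattering of a projection such as $x \mapsto \ind{g(x)=y}$ does not pin down a fixed alternative label at each point (different functions may avoid $y$ via different labels), so a binarily shattered set need not be Natarajan-shattered and the projection's VC dimension is \emph{not} bounded by $d$; it is governed by the graph dimension, which can be arbitrarily larger. For instance, the class $\{g_A\}$ with $g_A(x)=A$ for $x\in A$ and $g_A(x)=y_0$ otherwise has Natarajan dimension $1$, yet its $y_0$-projection shatters the entire domain. Second, even granting a per-pair bound of $|S|^{d}$, ``summing over the at most $|\cY|^{2}$ pairs'' would yield a bound of order $|\cY|^{2}|S|^{d}$, which is false: the Hamming ball of radius $d$ around a fixed function has Natarajan dimension $d$ and contains $\sum_{i\leq d}\binom{|S|}{i}(|\cY|-1)^{i}$ functions, which exceeds $|\cY|^{2}|S|^{d}$ for fixed $d\geq 3$ and $|\cY|$ large; recovering a multiclass behavior from its per-pair binary patterns would in fact require a product over pairs, not a sum. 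So the pairwise-projection route cannot reproduce the stated bound; complete the decorated induction instead, or simply cite the result as the paper does.
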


Classes of low Natarajan dimension also enjoy the following uniform convergence
guarantee.

\begin{lem} \label{lem:natarajanAgnostic}
	Let $\cG$ have Natarajan dimension $d$ and fix a loss function $\ell : \cG
	\times \cX \to [0,1]$. For any distribution $P$ over $\cX$, if $S$ is an i.i.d.~sample drawn from $P$ of size $O(\frac{1}{\epsilon^2}(d \log |\cY| + \log
	\frac{1}{\delta}))$, then with probability at least $1-\delta$ we have
	$
	\sup_{g \in \cG} \, \left|
	\expect_{x \sim P}[\ell(g,x)] - \frac{1}{n}\sum_{x\in S} \ell(g,x)
	\right| \leq \epsilon.
	$
\end{lem}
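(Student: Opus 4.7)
The plan is to prove the lemma via the classical symmetrization plus growth-function argument, adapted to the multiclass setting by invoking Natarajan's lemma (Lemma~\ref{lem:natarajan}). The key observation is that $\ell(g,x)$ is determined by the pair $(g(x),x)$, so on any finite set $T\subset\cX$ the vector of losses $(\ell(g,x))_{x\in T}$ takes at most $|\cG\resto_T|\leq |T|^d|\cY|^{2d}$ distinct values as $g$ ranges over $\cG$. This reduces a supremum over the (possibly infinite) class $\cG$ to a supremum over polynomially many effective functions on any fixed sample.

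First I would apply the standard symmetrization step with an independent ``ghost'' sample $S'=\{x'_1,\dots,x'_n\}$ drawn i.i.d.~from $P$, which yields
\[ \prob\bigl(\sup_{g\in\cG}\bigl|\expect_{x\sim P}\ell(g,x)-\tfrac{1}{n}\sum_{x\in S}\ell(g,x)\bigr|>\epsilon\bigr) \leq 2\,\prob\bigl(\sup_{g\in\cG}\bigl|\tfrac{1}{n}\sum_{i=1}^n\ell(g,x_i)-\tfrac{1}{n}\sum_{i=1}^n\ell(g,x'_i)\bigr|>\tfrac{\epsilon}{2}\bigr). \]
Next I would condition on the double sample $T=S\cup S'$ and introduce i.i.d.~Rademacher swaps $\sigma_i\in\{\pm1\}$ exchanging $x_i$ with $x'_i$; by the i.i.d.~structure the distribution of the inner supremum is unchanged, and the supremum now ranges over at most $(2n)^d|\cY|^{2d}$ distinct loss-vectors by Lemma~\ref{lem:natarajan}. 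For each fixed effective $g$, Hoeffding's inequality applied to the signed average $\tfrac{1}{n}\sum_i\sigma_i(\ell(g,x_i)-\ell(g,x'_i))$ gives a tail of $2\exp(-n\epsilon^2/8)$, and a union bound over the $(2n)^d|\cY|^{2d}$ effective functions yields total failure probability at most $4(2n)^d|\cY|^{2d}\exp(-n\epsilon^2/8)$. Setting this $\leq\delta$ and inverting gives the claimed sample complexity.

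The main obstacle in matching the stated bound exactly is the residual $\log n$ factor produced by the direct growth-function route: the crude Hoeffding-plus-union-bound yields $n=O(\tfrac{1}{\epsilon^2}(d\log|\cY|+d\log n+\log(1/\delta)))$, from which one can solve self-consistently to absorb the $d\log n$ term whenever the other terms dominate. A cleaner path that avoids this subtlety is to bound the Rademacher complexity of the loss class by combining Massart's finite-class lemma with Lemma~\ref{lem:natarajan}, obtaining an $O(\sqrt{(d\log|\cY|)/n})$ bound via chaining, and then invoke the standard Rademacher generalization inequality. In either case, this is the multiclass uniform-convergence theorem for classes of bounded Natarajan dimension recorded in~\citep{SBD14:MLBook}, which one could alternatively cite directly.
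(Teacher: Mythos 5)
The paper does not actually prove this lemma --- it is invoked as a standard uniform-convergence result for classes of bounded Natarajan dimension and cited to \citep{SBD14:MLBook} --- so there is no in-paper argument to compare against; your proposal is a reconstruction of the standard proof of that cited result, and it is essentially sound. The double-sampling/symmetrization step, the swap with random signs, the reduction of the supremum to at most $(2n)^d|\cY|^{2d}$ effective loss vectors via Lemma~\ref{lem:natarajan}, and the Hoeffding-plus-union-bound finish are all correct. Two caveats are worth making explicit. First, your ``key observation'' that $\ell(g,x)$ is determined by $(g(x),x)$ is not literally implied by the lemma's statement ($\ell:\cG\times\cX\to[0,1]$ is formally arbitrary in $g$), but it is exactly the reading under which the statement is true and under which the paper applies it (there, $c(g,g',x)=\ind{g(x)\neq g'(x)}$ factors through the outputs); stating it as an assumption, as you do, is the right move. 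Second, you correctly flag that the crude growth-function route leaves a residual $d\log n$ (equivalently $d\log(d/\epsilon)$) term, so it proves a slightly weaker sample bound than the one displayed; your proposed fix is the right idea, though the phrasing conflates two tools --- Massart's finite-class lemma applied to the restriction still retains the $\log n$, and it is the chaining/covering-number refinement (or a direct appeal to the multiclass fundamental theorem in \citep{SBD14:MLBook}) that removes it and yields the clean $O\left(\frac{1}{\epsilon^2}(d\log|\cY|+\log\frac{1}{\delta})\right)$ rate. Since the paper itself treats the lemma as citable background, either finishing route is acceptable.
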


%
%

\section{Appendix for Section~\ref{sec:arb}}	\label{app:arb}

\noindent\textbf{Theorem~\ref{thm:nn-upper-bound}.}
	\emph{Let $d$ be a metric on $\cX$, $P$ be a distribution on $\cX$, and $u$ be
	an $L$-Lipschitz utility function. Let $S$ be a set of individuals such that
	there exists $\hat \cX \subset \cX$ with $P(\hat \cX) \geq 1-\alpha$ and
	$\sup_{x \in \hat \cX} d(x, \NN_S(x)) \leq \beta/(2L)$. Then for any
	classifier $h : S \to \Delta(\cY)$ that is EF  on $S$, the extension
	$\overline{h} : \cX \to \Delta(\cY)$ given by $\overline{h}(x) = h(\NN_S(x))$
	is $(\alpha, \beta)$-EF on $P$.}

\begin{proof}
	Let $h : S \to \Delta(\cY)$ be any EF classifier on $S$ and
	$\overline h : \cX \to \Delta(\cY)$ be the nearest neighbor extension. Sample $x$ and $x'$ from $P$. Then, $x$ belongs to the subset $\hat \cX$ with probability at least $1-\alpha$. When this occurs, $x$ has a neighbor within distance $\beta/(2L)$ in the sample. Using
	the Lipschitz continuity of $u$, we have $|u(x,\overline{h}(x)) - u(\NN_S(x),
	h(\NN_S(x)))| \leq \beta/2$. Similarly, $|u(x,\overline{h}(x')) - u(\NN_S(x),
	h(\NN_S(x')))| \leq \beta/2$. Finally,
	since $\NN_S(x)$ does not envy $\NN_S(x')$ under $h$, it follows that $x$ does
	not envy $x'$ by more than $\beta$ under $\overline{h}$.
\end{proof}

\medskip

\begin{lem} \label{lem:euclideanCover}
	Suppose $\cX \subset \reals^q$, $d(x,x') = \norm{x-x'}_2$, and let $D =
	\sup_{x,x' \in \cX}d(x,x')$ be the diameter of $\cX$. For any distribution $P$
	over $\cX$, $\beta > 0$, $\alpha > 0$, and $\delta > 0$ there exists $\hat \cX
	\subset \cX$ such that $P(\hat \cX) \geq 1-\alpha$ and, if $S$ is an
	i.i.d.~sample drawn from $P$ of size $|S| =
	O(\frac{1}{\alpha}(\frac{LD\sqrt{q}}{\beta})^q(d \log
	\frac{LD\sqrt{q}}{\beta} + \log \frac{1}{\delta}))$, then with probability at
	least $1-\delta$, $\sup_{x \in \hat \cX} d(x, \NN_S(x)) \leq \beta/(2L)$.
\end{lem}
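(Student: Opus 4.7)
The plan is to cover $\cX$ by an axis-aligned grid of small cubes, discard the cubes whose $P$-mass is too small to worry about, and apply a standard Chernoff-plus-union-bound argument to guarantee that the sample hits every remaining (heavy) cube.

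First, I would fix a grid side length $s = \beta/(2L\sqrt{q})$, chosen so that each grid cube has Euclidean diameter exactly $s\sqrt{q} = \beta/(2L)$. Since $\cX$ has diameter $D$, it is contained in an axis-aligned box of side length $D$, which is covered by at most
\[
N = \left\lceil \frac{D}{s}\right\rceil^q = O\!\left(\left(\frac{LD\sqrt{q}}{\beta}\right)^q\right)
\]
such cubes. Let $C_1, \ldots, C_N$ be those cubes (intersected with $\cX$), and call $C_j$ \emph{heavy} if $P(C_j) \geq \alpha/N$ and \emph{light} otherwise. Define $\hat \cX$ to be the union of the heavy cubes; then $P(\cX \setminus \hat \cX) \leq N \cdot (\alpha/N) = \alpha$, so $P(\hat\cX) \geq 1-\alpha$, as required by the theorem.

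Next I would argue that with the stated sample size, every heavy cube is hit by $S$ with probability at least $1-\delta$. For a single heavy cube $C_j$, the probability that an i.i.d.~sample of size $n$ contains no point of $C_j$ is at most $(1 - \alpha/N)^n \leq \exp(-n\alpha/N)$. A union bound over the at most $N$ heavy cubes gives failure probability at most $N\exp(-n\alpha/N)$. Setting this below $\delta$ and solving yields the sufficient condition
\[
n \;\geq\; \frac{N}{\alpha}\bigl(\log N + \log(1/\delta)\bigr),
\]
which, after plugging in the bound on $N$, matches the sample complexity in the lemma statement (up to a constant).

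Finally, conditional on every heavy cube being hit, I would conclude the covering property: for any $x \in \hat \cX$ it lies in some heavy cube $C_j$, and that cube contains at least one sample point $x_i \in S$, so
\[
d(x, \NN_S(x)) \;\leq\; d(x, x_i) \;\leq\; \operatorname{diam}(C_j) \;=\; s\sqrt{q} \;=\; \beta/(2L),
\]
which is exactly the desired bound on $\sup_{x \in \hat\cX} d(x,\NN_S(x))$. The argument is essentially routine; the only place that needs a little care is choosing the grid side length so that the diameter bound holds in the $\ell_2$ metric and verifying that the number of grid cells matches the claimed $q$-dependence, so that the final $N/\alpha$ prefactor has the correct $(LD\sqrt{q}/\beta)^q$ form.
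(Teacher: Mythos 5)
Your proposal is correct and follows essentially the same route as the paper's own proof: the same grid of cubes with side length $\beta/(2L\sqrt{q})$, the same distinction between cubes of mass at least $\alpha/N$ and lighter ones to define $\hat\cX$, and the same $(1-\alpha/N)^n$ plus union-bound calculation yielding $n = O\bigl(\tfrac{N}{\alpha}(\log N + \log\tfrac{1}{\delta})\bigr)$. No gaps to report.
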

\begin{proof}
	Let $C$ be the smallest cube containing $\cX$. Since the diameter of $\cX$ is
	$D$, the side-length of $C$ is at most $D$. Let $s = \beta/(2L\sqrt{q})$ be
	the side-length such that a cube with side-length $s$ has diameter
	$\beta/(2L)$. It takes at most $m = \lceil D/s \rceil^q$ cubes of side-length
	$s$ to cover $C$. Let $C_1, \dots, C_m$ be such a covering, where each $C_i$
	has side-length $s$.

	Let $C_i$ be any cube in the cover for which $P(C_i) > \alpha / m$. The
	probability that a sample of size $n$ drawn from $P$ does not contain a sample
	in $C_i$ is at most $(1-\alpha/m)^n \leq e^{-n\alpha/m}$. Let $I = \{i \in [m] \,:\,
	P(C_i) \geq \alpha / m\}$. By the union bound, the probability that there
	exists $i \in I$ such that $C_i$ does not contain a sample is at most
	$me^{-n\alpha/m}$. Setting
	\begin{align*}
	n &= \frac{m}{\alpha}\ln \frac{m}{\delta} \\
	&= O\left(
	\frac{1}{\alpha}
	\biggl(\frac{LD\sqrt{q}}{\beta}\biggr)^q
	\biggl(q \log \frac{LD\sqrt{q}}{\beta} + \log \frac{1}{\delta}\biggr)
	\right)
	\end{align*}
	results in this upper bound being $\delta$. For the remainder of the proof,
	assume this high probability event occurs.

	Now let $\hat \cX = \bigcup_{i \in I} C_i$. For each $j \not \in I$, we know
	that $P(C_j) < \alpha/m$. Since there at most $m$ such cubes, their total
	probability mass is at most $\alpha$. It follows that $P(\hat \cX) \geq
	1-\alpha$. Moreover, every point $x \in \hat \cX$ belongs to one of the cubes
	$C_i$ with $i \in I$, which also contains a sample point. Since the diameter
	of the cubes in our cover is $\beta/(2L)$, it follows that $\dist(x,\NN_S(x))
	\leq \beta/(2L)$ for every $x \in \hat \cX$, as required.
\end{proof}

\noindent\textbf{Theorem~\ref{thm:lb}.}
	\emph{
	There exists a space of individuals $\cX \subset \reals^q$, and a distribution $P$ over $\cX$ such that, for every randomized algorithm $\mathcal{A}$ that extends classifiers on a sample to $\cX$, there exists an $L$-Lipschitz utility function $u$ such that, when a sample of individuals $S$ of size $n = 4^q / 2$ is drawn from $P$ without replacement, there exists an EF classifier on $S$ for which, with probability at least $1 - 2\exp(-4^q/100) - \exp(-4^q/200)$ jointly over the randomness of $\mathcal{A}$ and $S$, its extension by $\mathcal{A}$ is not $(\alpha, \beta)$-EF with respect to $P$ for any $\alpha < 1/25$ and $\beta < L/8$.}

\begin{proof}
		Let the space of individuals be $\cX = [0,1]^q$ and the outcomes be $\cY = \{0,1\}$. We partition the space $\cX$ into cubes of side length $s = 1/4$. So, the total number of cubes is $m = \left(1/s\right)^q = 4^q$. Let these cubes be denoted by $c_1, c_2, \dots c_m$, and let their centers be denoted by $\mu_1, \mu_2, \dots \mu_m$. Next, let $P$ be the uniform distribution over the centers $\mu_1, \mu_2, \dots \mu_m$. For brevity, whenever we say ``utility function'' in the rest of the proof, we mean ``$L$-Lipschitz utility function.''

		To prove the theorem, we use Yao's minimax principle~\citep{Yao77}. Specifically, consider the following two-player zero sum game. Player 1 chooses a deterministic algorithm $\mathcal{D}$ that extends classifiers on a sample to $\cX$, and player 2 chooses a utility function $u$ on $\cX$. For any subset $S \subset \cX$, define the classifier $h_{u,S}:S \to \cY$ by assigning each individual in $S$ to his favorite outcome with respect to the utility function $u$, i.e. $h_{u,S}(x) = \text{arg\,max}_{y \in \cY} u(x,y)$ for each $x \in S$, breaking ties lexicographically. Define the cost of playing algorithm $\mathcal{D}$ against utility function $u$ as the probability over the sample $S$ (of size $m/2$ drawn from $P$ without replacement) that the extension of $h_{u,S}$ by $\mathcal{D}$ is not $(\alpha, \beta)$-EF with respect to $P$ for any $\alpha < 1/25$ and $\beta < L/8$. Yao's minimax principle implies that for any randomized algorithm~$\mathcal{A}$, its expected cost with respect to the worst-case utility function~$u$ is at least as high as the expected cost of any distribution over utility functions that is played against the best deterministic algorithm $\mathcal{D}$ (which is tailored for that distribution). Therefore, we establish the desired lower bound by choosing a specific distribution over utility functions, and showing that the best deterministic algorithm against it has an expected cost of at least $1 - 2\exp(-m/100) - \exp(-m/200)$.

		To define this distribution over utility functions, we first sample outcomes $y_1, y_2, \dots, y_m$ i.i.d. from Bernoulli($1/2$). Then, we associate each cube center $\mu_i$ with the outcome $y_i$, and refer to this outcome as the \textit{favorite} of $\mu_i$. For brevity, let $\neg y$ denote the outcome other than $y$, i.e. $\neg y = (1-y)$. For any $x \in \cX$, we define the utility function as follows. Letting $c_j$ be the cube that $x$ belongs to,
		\begin{equation}
		\label{eq:util}
		u(x,y_j) = L \left[\frac{s}{2} - \|x - \mu_j\|_{\infty}\right]; \quad u(x, \neg y_j) = 0.
		\end{equation}
		See Figure~\ref{fig:grid} for an illustration.

		\begin{figure}[t]
			\centering
			\includegraphics[width=0.4\columnwidth]{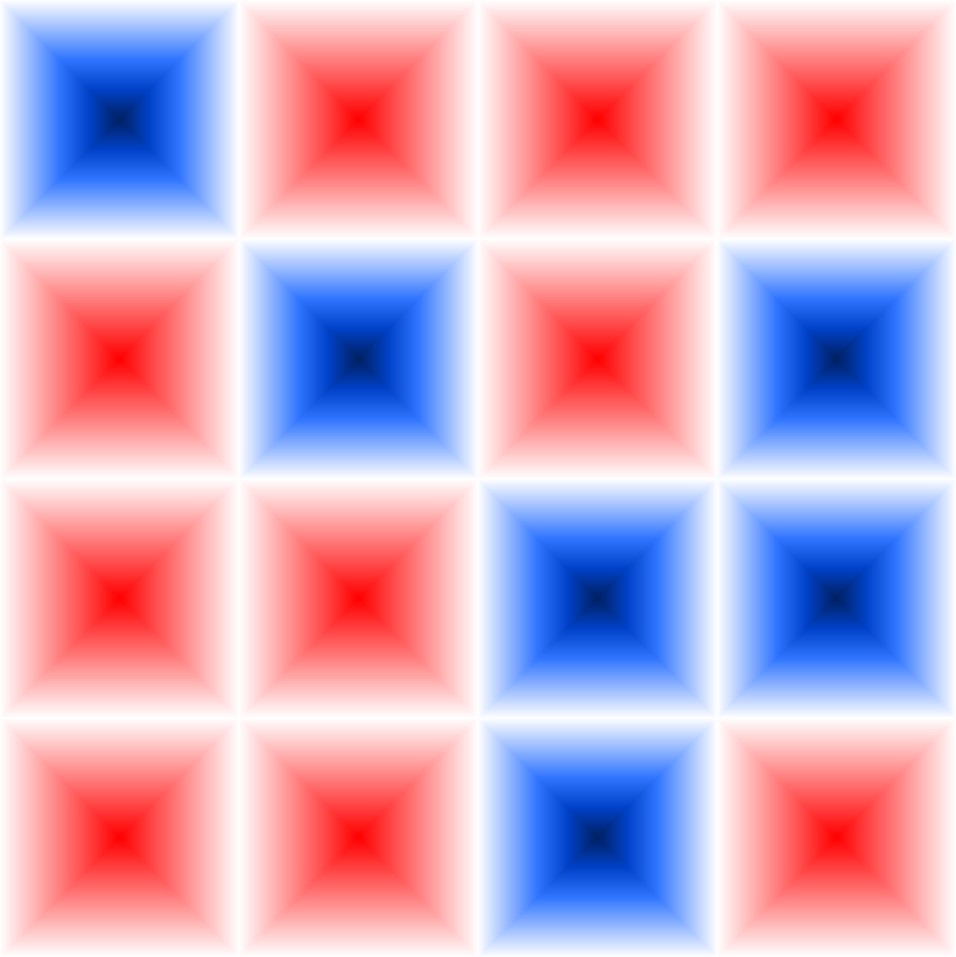}
			\caption{Illustration of $\cX$ and an example utility function $u$ for $d=2$. Red shows preference for $1$, blue shows preference for $0$, and darker shades correspond to more intense preference. (The gradients are rectangular to match the $L_\infty$ norm, so, strangely enough, the misleading X pattern is an optical illusion.)}
			\label{fig:grid}
		\end{figure}

		We claim that the utility function of Equation~\eqref{eq:util} is indeed $L$-Lipschitz with respect to any $L_p$ norm. This is because for any cube $c_i$, and for any $x, x' \in c_i$, we have
		\begin{align*}
		\left|u(x,y_i) - u(x',y_i)\right| &= L\left|\|x - \mu_i\|_{\infty} - \|x' - \mu_i\|_{\infty}\right|\\
		&\leq L\|x - x'\|_{\infty}
		\leq L\|x - x'\|_{p}.
		\end{align*}
		Moreover, for the other outcome, we have $u(x, \neg y_i) = u(x', \neg y_i) = 0$. It follows that $u$ is $L$-Lipschitz within every cube. At the boundary of the cubes, the utility for any outcome is $0$, and hence $u$ is also continuous throughout $\cX$. Because it is piecewise Lipschitz and continuous, $u$ must be $L$-Lipschitz throughout $\cX$, with respect to any $L_p$ norm.

		Next, let $\mathcal{D}$ be an arbitrary deterministic algorithm that extends classifiers on a sample to $\cX$. We draw the sample $S$ of size $m/2$ from $P$ without replacement. Consider the distribution over favorites of individuals in $S$. Each individual in $S$ has a favorite that is sampled independently from Bernoulli$(1/2)$. Hence, by Hoeffding's inequality, the fraction of individuals in $S$ with a favorite of $0$ is between $\frac{1}{2} - \epsilon$ and $\frac{1}{2} + \epsilon$ with probability at least $1 - 2\exp(-m \epsilon^2)$. The same holds simultaneously for the fraction of individuals with favorite $1$.

		Given the sample $S$ and the utility function $u$ on the sample (defined by the instantiation of their favorites), consider the classifier $h_{u,S}$, which maps each individual $\mu_i$ in the sample $S$ to his favorite $y_i$. This classifier is clearly EF on the sample. Consider the extension $h_{u,S}^\mathcal{D}$ of $h_{u,S}$ to the whole of $\cX$ as defined by algorithm $\mathcal{D}$. Define two sets $Z_0$ and $Z_1$ by letting $Z_y = \{\mu_j \notin S \ | \ h_{u,S}^\mathcal{D}(\mu_j) = y\}$, and let $y_*$ denote an outcome that is assigned to at least half of the out-of-sample centers, i.e., an outcome for which $|Z_{y_*}| \geq |Z_{\neg y_*}|$. Furthermore, let $\theta$ denote the fraction of out-of-sample centers assigned to $y_*$. Note that, since $|S| = m/2$, the number of out-of-sample centers is also exactly $m/2$. This gives us $|Z_{y_*}| = \theta \frac{m}{2}$, where $\theta \geq \frac{1}{2}$.

		Consider the distribution of favorites in $Z_{y_*}$ (these are independent from the ones in the sample since $Z_{y_*}$ is disjoint from $S$). Each individual in this set has a favorite sampled independently from Bernoulli$(1/2)$. Hence, by Hoeffding's inequality, the fraction of individuals in $Z_{y_*}$ whose favorite is $\neg y_*$ is at least $\frac{1}{2} - \epsilon$ with probability at least $1 - \exp(-\frac{m}{2} \epsilon^2)$. We conclude that with a probability at least
		$1 - 2\exp(-m \epsilon^2) - \exp(-\frac{m}{2} \epsilon^2)$,
		the sample $S$ and favorites (which define the utility function $u$) are such that: (i)~the fraction of individuals in $S$ whose favorite is $y \in \{0,1\}$ is between $\frac{1}{2} - \epsilon$ and $\frac{1}{2} + \epsilon$, and (ii)~the fraction of individuals in $Z_{y_*}$ whose favorite is $\neg y_*$ is at least $\frac{1}{2} - \epsilon$.

		We now show that for such a sample $S$ and utility function~$u$, $h_{u,S}^\mathcal{D}$ cannot be $(\alpha, \beta)$-EF with respect to $P$ for any $\alpha < 1/25$ and $\beta < L/8$. To this end, sample $x$ and $x'$ from $P$. One scenario where $x$ envies $x'$ occurs when (i)~the favorite of $x$ is $\neg y_*$, (ii)~$x$ is assigned to $y_*$, and (iii)~$x'$ is assigned to $\neg y_*$. Conditions (i) and (ii) are satisfied when $x$ is in $Z_{y_*}$ and his favorite is $\neg y_*$. We know that at least a $\frac{1}{2} - \epsilon$ fraction of the individuals in $Z_{y_*}$ have the favorite $\neg y_*$. Hence, the probability that conditions (i) and (ii) are satisfied by $x$ is at least $(\frac{1}{2} - \epsilon)|Z_{y_*}|\frac{1}{m} = (\frac{1}{2} - \epsilon)\frac{\theta}{2}$.
		Condition (iii) is satisfied when $x'$ is in $S$ and has favorite $\neg y_*$ (and hence assigned $\neg y_*$), or, if $x'$ is in $Z_{\neg y_*}$. We know that at least a $\left(\frac{1}{2} - \epsilon\right)$ fraction of the individuals in $S$ have the favorite $\neg y_*$. Moreover, the size of $Z_{\neg y_*}$ is $(1-\theta)\frac{m}{2}$. So, the probability that condition (iii) is satisfied by $x'$ is at least
		\[\frac{\left(\frac{1}{2} - \epsilon\right)|S| + |Z_{\neg y_*}|}{m} = \frac{1}{2}\left(\frac{1}{2} - \epsilon\right) + \frac{1}{2}(1-\theta).\]

		Since $x$ and $x'$ are sampled independently, the probability that all three conditions are satisfied is at least \[\left(\frac{1}{2} - \epsilon\right)\frac{\theta}{2} \cdot  \left[\frac{1}{2}\left(\frac{1}{2} - \epsilon\right) + \frac{1}{2}(1-\theta)\right].\] This expression is a quadratic function in $\theta$, that attains its minimum at $\theta = 1$ irrespective of the value of $\epsilon$. Hence, irrespective of $\mathcal{D}$, this probability is at least $\left[\frac{1}{2}\left(\frac{1}{2} - \epsilon\right)\right]^2$. For concreteness, let us choose $\epsilon$ to be $1/10$ (although it can be set to be much smaller). On doing so, we have that the three conditions are satisfied with probability at least $1/25$. And when these conditions are satisfied, we have $u(x, h_{u,S}^\mathcal{D}(x)) = 0$ and $u(x, h_{u,S}^\mathcal{D}(x')) = Ls/2$, i.e., $x$ envies $x'$ by $Ls/2 = L/8$. This shows that, when $x$ and $x'$ are sampled from $P$, with probability at least $1/25$, $x$ envies $x'$ by $L/8$. We conclude that with probability at least 	$1 - 2\exp(-m/100) - \exp(-m/200)$ jointly over the selection of the utility function $u$ and the sample $S$, the extension of $h_{u,S}$ by $\mathcal{D}$ is not $(\alpha,\beta)$-EF with respect to $P$ for any $\alpha < 1/25$ and $\beta < L/8$.

		To convert the joint probability into expected cost in the game, note that for two discrete, independent random variables $X$ and $Y$, and for a Boolean function $\mathcal{E}(X,Y)$, it holds that
		\begin{equation}
		\label{eq:exp}
		\operatorname{Pr}_{X,Y}(\mathcal{E}(X,Y) = 1) = \mathbb{E}_X\left[\operatorname{Pr}_Y(\mathcal{E}(X,Y) = 1)\right].
		\end{equation}
		Given sample $S$ and utility function $u$, let $\mathcal{E}(u, S)$ be the Boolean function that equals $1$ if and only if the extension of $h_{u,S}$ by $\mathcal{D}$ is not $(\alpha,\beta)$-EF with respect to $P$ for any $\alpha < 1/25$ and $\beta < L/8$. From Equation~\eqref{eq:exp},
		$\operatorname{Pr}_{u,S}(\mathcal{E}(u,S) = 1)$ is equal to $\mathbb{E}_u\left[\operatorname{Pr}_S(\mathcal{E}(u,S) = 1)\right]$. The latter term is exactly the expected value of the cost, where the expectation is taken over the randomness of $u$. It follows that the expected cost of (any) $\mathcal{D}$ with respect to the chosen distribution over utilities is at least $1 - 2\exp(-m/100) - \exp(-m/200)$.
	%
	\end{proof}

\section{Appendix for Section~\ref{sec:mixtures}} \label{app:mixtures}

This section is devoted to proving our main result:

\noindent\textbf{Theorem~\ref{thm:mixtureGeneralize}.}
	\emph{Suppose $\cG$ is a family of deterministic classifiers of Natarajan
	dimension $d$, and let $\cH = \cH(\cG,m)$ for $m\in \mathbb{N}$. For any distribution
	$P$ over $\cX$, $\gamma>0$, and $\delta > 0$, if $S = \{(x_i, x'_i)\}_{i=1}^n$
	is an i.i.d.~sample of pairs drawn from $P$ of size \[n \geq
	O\left(\frac{1}{\gamma^2}\left(dm^2 \log \frac{dm|\cY|\log(m|\cY|/\gamma)}{\gamma} + \log
	\frac{1}{\gamma}\right)\right),\] then with probability at least $1-\delta$, every classifier $h
	\in \cH$ that is $(\alpha,\beta)$-pairwise-EF on $S$ is also
	$(\alpha+7\gamma, \beta+4\gamma)$-EF on $P$.}

We start with an observation that will be required later.

\begin{lem} \label{lem:G2}
	Let $\cG = \{g : \cX \to \cY\}$ have Natarajan dimension $d$. For $g_1,g_2 \in
	\cG$, let $(g_1,g_2) : \cX \to \cY^2$ denote the function given by
	$(g_1,g_2)(x) = (g_1(x), g_2(x))$ and let $\cG^2 = \{ (g_1,g_2) \,:\, g_1,g_2 \in
	\cG\}$. Then the Natarajan dimension of $\cG^2$ is at most $2d$.
\end{lem}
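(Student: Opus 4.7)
The plan is to prove the contrapositive by the natural ``split the coordinates'' argument. Suppose $\cG^2$ multi-class shatters points $x_1, \ldots, x_n$ with witness label pairs $y_i = (a_i, b_i)$ and $y_i' = (a_i', b_i')$, where $y_i \neq y_i'$ for every $i$. Since $y_i \neq y_i'$, at least one coordinate must differ, so I partition the index set as $I_1 = \{i \in [n] : a_i \neq a_i'\}$ and $I_2 = [n] \setminus I_1$. By construction, every $i \in I_2$ satisfies $b_i \neq b_i'$, so the second-coordinate witnesses are valid on $I_2$.

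The main step is to show that $\cG$ itself multi-class shatters $\{x_i : i \in I_1\}$ using the witness labels $a_i, a_i'$. Given any subset $C \subseteq I_1$, I apply the shattering hypothesis of $\cG^2$ to the index set $C \cup I_2 \subseteq [n]$: this yields a pair $(g_1, g_2) \in \cG^2$ such that $(g_1(x_i), g_2(x_i)) = (a_i, b_i)$ for $i \in C \cup I_2$ and $(a_i', b_i')$ otherwise. Restricting attention to the first coordinate and to indices in $I_1$: for $i \in C$, we get $g_1(x_i) = a_i$; for $i \in I_1 \setminus C$, since $I_1 \cap I_2 = \emptyset$, we have $i \notin C \cup I_2$, and hence $g_1(x_i) = a_i'$. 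Thus $g_1 \in \cG$ realizes the desired labeling on $I_1$, witnessing that $\cG$ shatters $I_1$. An identical argument on the second coordinate shows that $\cG$ shatters $\{x_i : i \in I_2\}$ with witnesses $b_i, b_i'$.

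Since the Natarajan dimension of $\cG$ is $d$, both $|I_1| \leq d$ and $|I_2| \leq d$, so $n = |I_1| + |I_2| \leq 2d$, completing the proof. I do not anticipate any genuine obstacle here; the only subtle point is the correct choice of subset ($C \cup I_2$ rather than $C$ alone) to ensure that indices in $I_1 \setminus C$ land outside the ``realized as $y_i$'' set, and that this choice is consistent regardless of how $b_i, b_i'$ compare on $I_1$.
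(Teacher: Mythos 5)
Your proof is correct and takes essentially the same route as the paper's: both arguments project the shattering witnesses onto the coordinate in which they disagree, the only cosmetic difference being that the paper pigeonholes (one of the two disagreement counts is at least half the shattered set, and only that side is shattered by $\cG$) while you bound both blocks $I_1, I_2$ of a disjoint partition and sum. One small remark: the choice $C \cup I_2$ is not actually needed --- applying the shattering hypothesis of $\cG^2$ with the subset $C$ alone already forces $g_1(x_i) = a_i'$ for every $i \in I_1 \setminus C$, since such indices lie outside $C$, so the point you flag as subtle is immaterial.
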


\begin{proof}
	Let $D$ be the Natarajan dimension of $\cG^2$. Then we know that there exists
	a collection of points $x_1, \dots, x_D \in \cX$ that is shattered by $\cG^2$,
	which means there are two sequences $q_1, \dots, q_n \in \cY^2$ and $q'_1,
	\dots, q'_n \in \cY^2$ such that for all $i$ we have $q_i \neq q'_i$ and for
	any subset $C \subset [D]$ of indices, there exists $(g_1,g_2) \in \cG^2$ such
	that $(g_1,g_2)(x_i) = q_i$ if $i \in C$ and $(g_1,g_2)(x_i) = q'_i$ otherwise.

	Let $n_1 = \sum_{i=1}^D \ind{q_{i1} \neq q'_{i1}}$ and $n_2 = \sum_{i=1}^D
	\ind{q_{i2} \neq q'_{i2}}$ be the number of pairs on which the first and
	second labels of $q_i$ and $q'_i$ disagree, respectively. Since none of the
	$n$ pairs are equal, we know that $n_1 + n_2 \geq D$, which implies that at at
	least one of $n_1$ or $n_2$ must be $\geq D/2$. Assume without loss of generality that $n_1 \geq
	D/2$ and that $q_{i1} \neq q'_{i1}$ for $i = 1, \dots, n_1$. Now consider any
	subset of indices $C \subset [n_1]$. We know there exists a pair of functions
	$(g_1,g_2) \in \cG^2$ with $(g_1,g_2)(x_i)$ evaluating to $q_i$ if $i \in C$
	and $q'_i$ if $i \not \in C$. But then we have $g_1(x_i) = q_{i1}$ if $i \in
	C$ and $g_1(x_i) = q'_{i1}$ if $i \not \in C$, and $q_{i1} \neq q'_{i1}$ for all
	$i \in [n_1]$. It follows that $\cG$ shatters $x_1, \dots, x_{n_1}$, which
	consists of at least $D/2$ points. Therefore, the Natarajan dimension of
	$\cG^2$ is at most $2d$, as required.
\end{proof}

We now turn two the theorem's two main steps, presented in the following two lemmas.

\begin{lem} \label{lem:finiteGeneralize}
	Let $\cH \subset \{h : \cX \to \Delta(\cY)\}$ be a finite family of
	classifiers. For any $\gamma > 0$, $\delta > 0$, and $\beta \geq 0$ if $S =
	\{(x_i, x'_i)\}_{i=1}^n$ is an i.i.d.~sample of pairs from $P$ of size $n \geq
	\frac{1}{2\gamma^2}\ln \frac{|\cH|}{\delta}$, then with probability at least
	$1-\delta$, every $h \in \cH$ that is $(\alpha,\beta)$-pairwise-EF on $S$
	(for any $\alpha)$ is also $(\alpha + \gamma, \beta)$-EF on $P$.
\end{lem}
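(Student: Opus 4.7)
The plan is a standard uniform convergence argument via Hoeffding plus a union bound, where the key observation is that the pairwise envy event on an i.i.d.\ pair of individuals has expectation exactly equal to the population envy probability.

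First, I would fix an arbitrary classifier $h \in \cH$ and define, for each sampled pair $(x_i, x'_i)$, the Bernoulli random variable
\[
Z_i(h) = \ind{u(x_i, h(x_i)) < u(x_i, h(x'_i)) - \beta}.
\]
Because the pairs are drawn i.i.d.\ from $P \times P$, the variables $Z_1(h), \ldots, Z_n(h)$ are i.i.d.\ with common mean
\[
\expect[Z_i(h)] = \prob_{x,x' \sim P}\bigl(u(x, h(x)) < u(x, h(x')) - \beta\bigr),
\]
which is precisely the quantity controlled by $(\alpha, \beta)$-EF on $P$. Meanwhile, $\frac{1}{n}\sum_i Z_i(h)$ is exactly the empirical pairwise envy appearing in the definition of $(\alpha,\beta)$-pairwise-EF on $S$.

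Second, I would apply the one-sided Hoeffding inequality to bound
\[
\prob\!\left(\expect[Z_i(h)] - \tfrac{1}{n}\textstyle\sum_i Z_i(h) > \gamma\right) \leq \exp(-2 n \gamma^2),
\]
and then take a union bound over all $h \in \cH$, yielding a failure probability of at most $|\cH|\exp(-2n\gamma^2)$. Requiring this to be at most $\delta$ gives exactly the sample size condition $n \geq \frac{1}{2\gamma^2}\ln(|\cH|/\delta)$ in the statement.

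Finally, on the complementary high-probability event, for every $h \in \cH$ simultaneously,
\[
\prob_{x,x' \sim P}\bigl(u(x, h(x)) < u(x, h(x')) - \beta\bigr) \leq \tfrac{1}{n}\textstyle\sum_i Z_i(h) + \gamma.
\]
If $h$ is $(\alpha, \beta)$-pairwise-EF on $S$, then the empirical average on the right is at most $\alpha$, so the population envy probability is at most $\alpha + \gamma$, meaning $h$ is $(\alpha + \gamma, \beta)$-EF on $P$. There is no real obstacle here: the only thing to be careful about is the matching of the two quantities so that Hoeffding applies without needing a refined concentration tool, which is precisely why the statement is phrased in terms of \emph{pairs} rather than arbitrary individuals from $S$.
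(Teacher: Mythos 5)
Your proposal is correct and follows essentially the same argument as the paper's proof: define the Bernoulli envy indicator for each sampled pair, apply one-sided Hoeffding to a fixed $h$, union bound over the finite family, and translate the stated sample size into failure probability at most $\delta$. No gaps.
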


\begin{proof}
	Let $f(x,x',h) = \ind{u(x,h(x)) < u(x,h(x')) - \beta}$ be the indicator that
	$x$ is envious of $x'$ by at least $\beta$ under classifier $h$. Then $f(x_i,
	x'_i, h)$ is a Bernoulli random variable with success probability
	$\expect_{x,x'\sim P}[f(x,x',h)]$. Applying Hoeffding's inequality to any
	fixed hypothesis $h \in \cH$ guarantees that $\prob_S(\expect_{x,x'\sim
		P}[f(x,x',h)] \geq \frac{1}{n}\sum_{i=1}^n f(x_i, x'_i, h) + \gamma) \leq
	\exp(-2n\gamma^2)$. Therefore, if $h$ is $(\alpha,\beta)$-EF on $S$,
	then it is also $(\alpha+\gamma, \beta)$-EF on $P$ with probability at least $1 - \exp(-2n\gamma^2)$. Applying the union
	bound over all $h \in \cH$ and using the lower bound on $n$ completes the
	proof.
\end{proof}

Next, we show that $\cH(\cG, m)$ can be covered by a finite subset. Since each
classifier in $\cH$ is determined by the choice of $m$ functions from $\cG$ and
mixing weights $\weights \in \Delta_m$, we will construct finite covers of $\cG$
and $\Delta_m$. Our covers $\hat \cG$ and $\hat \Delta_m$ will guarantee that
for every $g \in \cG$, there exists $\hat g \in \hat \cG$ such that $\prob_{x
\sim P}(g(x) \neq \hat g(x)) \leq \gamma/m$. Similarly, for any mixing weights
$\weights \in \Delta_m$, there exists $\hat \weights \in \Delta_m$ such that
$\norm{\weights - \hat \weights}_1 \leq \gamma$. If $h \in \cH(\cG,m)$ is the
mixture of $g_1, \dots, g_m$ with weights $\weights$, we let $\hat h$ be the
mixture of $\hat g_1, \dots, \hat g_m$ with weights $\hat \weights$. This
approximation has two sources of error: first, for a random individual $x \sim
P$, there is probability up to $\gamma$ that at least one $g_i(x)$ will disagree
with $\hat g_i(x)$, in which case $h$ and $\hat h$ may assign completely
different outcome distributions. Second, even in the high-probability event that
$g_i(x) = \hat g_i(x)$ for all $i \in [m]$, the mixing weights are not
identical, resulting in a small perturbation of the outcome distribution
assigned to $x$.

\begin{lem} \label{lem:finiteCover}
	Let $\cG$ be a family of deterministic classifiers with Natarajan dimension $d$,
	and let $\cH = \cH(\cG, m)$ for some $m\in\mathbb{N}$. For any $\gamma > 0$, there
	exists a subset $\hat \cH \subset \cH$ of size $O\bigl(\frac{(dm|\cY|^2 \log(m
		|\cY| / \gamma))^{dm}}{\gamma^{(d+1)m}}\bigr)$ such that for every $h \in \cH$
	there exists $\hat h \in \cH$ satisfying:
	\begin{enumerate}
		\item $\prob_{x \sim P}( \norm{h(x) - \hat h(x)}_1 > \gamma) \leq \gamma$.
		\item If $S$ is an i.i.d.~sample of individuals of size
		$O(\frac{m^2}{\gamma^2}(d \log |\cY| + \log \frac{1}{\delta}))$ then w.p.
		$\geq 1-\delta$, we have $\norm{h(x) - \hat h(x)}_1 \leq \gamma$ for all but
		a $2\gamma$-fraction of $x \in S$.
	\end{enumerate}
\end{lem}

\begin{proof}
	As described above, we begin by constructing finite covers of $\Delta_m$ and
	$\cG$. First, let $\hat \Delta_m \subset \Delta_m$ be the set of distributions
	over $[m]$ where each coordinate is a multiple of $\gamma/m$. Then we have
	$|\hat \Delta_m| = O( (\frac{m}{\gamma})^m)$ and for every $p \in \Delta_m$,
	there exists $q \in \hat \Delta_m$ such that $\norm{p - q}_1 \leq \gamma$.

	In order to find a small cover of $\cG$, we use the fact that it has low
	Natarajan dimension. This implies that the number of effective functions in
	$\cG$ when restricted to a sample $S'$ grows only polynomially in the size of
	$S'$. At the same time, if two functions in $\cG$ agree on a large sample,
	they will also agree with high probability on the distribution.

	Formally, let $S'$ be an i.i.d.~sample drawn from $P$ of size
	$O(\frac{m^2}{\gamma^2}d\log|\cY|)$, and let $\hat \cG = \cG\resto_{S'}$ be any
	minimal subset of $\cG$ that realizes all possible labelings of $S'$ by
	functions in $\cG$. We now argue that with probability 0.99, for every $g \in
	\cG$ there exists $\hat g \in \hat \cG$ such that $\prob_{x \sim P}(g(x) \neq
	\hat g(x)) \leq \gamma/m$. For any pair of functions $g,g' \in \cG$, let $(g,g') :
	\cX \to \cY^2$ be the function given by $(g,g')(x) = (g(x), g'(x))$, and let
	$\cG^2 = \{(g,g') \,:\, g,g' \in \cG\}$. The Natarajan dimension of $\cG^2$ is
	at most $2d$ by Lemma~\ref{lem:G2}. Moreover, consider the loss $c : \cG^2 \times \cX \to \{0,1\}$ given by $c(g,g',x) =
	\ind{g(x) \neq g'(x)}$. Applying Lemma~\ref{lem:natarajanAgnostic} with the
	chosen size of $|S'|$ ensures that with probability at least $0.99$ every pair
	$(g,g') \in \cG^2$ satisfies \[\left|\expect_{x \sim P}[c(g,g',x)] -
	\frac{1}{|S'|} \sum_{x \in S'} c(g,g',x)\right| \leq \frac{\gamma}{m}.\] By the definition
	of $\hat \cG$, for every $g \in \cG$, there exists $\hat g \in \hat \cG$ for
	which $c(g,\hat g,x) = 0$ for all $x \in S'$, which implies that $\prob_{x \sim
		P}(g(x) \neq \hat g(x)) \leq \gamma/m$.

	Using Lemma~\ref{lem:natarajan} to
	bound the size of $\hat \cG$, we have that \[|\hat \cG| \leq |S'|^d |\cY|^{2d} =
	O\left(\left(\frac{m^2}{\gamma^2}d|\cY|^2\log|\cY|\right)^d\right).\] Since this
	construction succeeds with non-zero probability, we are guaranteed that such a
	set $\hat \cG$ exists. Finally, by an identical uniform convergence argument,
	it follows that if $S$ is a fresh i.i.d.~sample of the size given in Item 2 of the lemma's
	statement, then, with probability at least $1-\delta$, every $g$ and $\hat g$ will disagree on at most a
	$2\gamma/m$-fraction of $S$, since they disagree with probability at most
	$\gamma/m$ on $P$.

	Next, let $\hat \cH = \{ h_{\vec g, \weights} \, : \, \vec g \in \hat G^m, \weights
	\in \hat \Delta_m\}$ be the same family as $\cH$, except restricted to choosing
	functions from $\hat \cG$ and mixing weights from $\hat \Delta_m$. Using the
	size bounds above and the fact that ${N \choose m} = O((\frac{N}{m})^m)$, we
	have that \[|\hat \cH| = {|\hat \cG| \choose m} \cdot |\hat \Delta_m| =
	O\left(\frac{(dm^2|\cY|^2 \log(m |\cY| /
		\gamma))^{dm}}{\gamma^{(2d+1)m}}\right).\]

	Suppose that $h$ is the mixture of $g_1, \dots, g_m \in \cG$ with weights
	$\weights \in \Delta_m$. Let $\hat g_i$ be the approximation to $g_i$ for each
	$i$, let $\hat \weights \in \hat \Delta_m$ be such that $\norm{\weights - \hat
		\weights}_1 \leq \gamma$, and let $\hat h$ be the random mixture of $\hat g_1,
	\dots, \hat g_m$ with weights $\hat \weights$. For an individual $x$ drawn from
	$P$, we have $g_i(x) \neq \hat g_i(x)$ with probability at most $\gamma/m$,
	and therefore they all agree with probability at least $1-\gamma$. When this
	event occurs, we have $\norm{h(x) - \hat h(x)}_1 \leq \norm{\weights - \hat
		\weights}_1 \leq \gamma$.

	The second part of the claim follows by similar reasoning, using the fact that
	for the given sample size $|S|$, with probability at least $1-\delta$, every
	$g \in \cG$ disagrees with its approximation $\hat g \in \hat \cG$ on at most
	a $2\gamma/m$-fraction of $S$. This means that $\hat g_i(x) = g_i(x)$ for all $i \in
	[m]$ on at least a $(1-2\gamma)$-fraction of the individuals $x$ in $S$. For these
	individuals, $\norm{h(x) - \hat h(x)}_1 \leq \norm{\weights - \hat \weights}_1 \leq \gamma$.
\end{proof}

Combining the generalization guarantee for finite families given in
Lemma~\ref{lem:finiteGeneralize} with the finite approximation given in
Lemma~\ref{lem:finiteCover}, we are able to show that envy-freeness also
generalizes for $\cH(\cG,m)$.

\begin{proof}[Proof of Theorem~\ref{thm:mixtureGeneralize}]
	Let $\hat \cH$ be the finite approximation to $\cH$ constructed in
	Lemma~\ref{lem:finiteCover}. If the sample is of size $|S| =
	O(\frac{1}{\gamma^2}(dm \log(dm|\cY|\log|\cY|/\gamma) + \log
	\frac{1}{\delta}))$, we can apply Lemma~\ref{lem:finiteGeneralize} to this
	finite family, which implies that for any $\beta' \geq 0$, with probability at
	least $1-\delta/2$ every $\hat h \in \hat \cH$ that is
	$(\alpha',\beta')$-pairwise-EF on $S$ (for any $\alpha'$) is also
	$(\alpha'+\gamma, \beta')$-EF on $P$. We apply this lemma with $\beta' = \beta +
	2\gamma$. Moreover, from Lemma~\ref{lem:finiteCover}, we know that if $|S| =
	O(\frac{m^2}{\gamma^2}(d \log|\cY| + \log \frac{1}{\delta}))$, then with
	probability at least $1-\delta/2$, for every $h \in \cH$, there exists  $\hat
	h \in \hat \cH$ satisfying $\norm{h(x) - \hat h(x)}_1 \leq \gamma$ for all but a
	$2\gamma$-fraction of the individuals in $S$. This implies that on all but at
	most a $4\gamma$-fraction of the pairs in $S$, $h$ and $\hat h$ satisfy this
	inequality for both individuals in the pair. Assume these high probability
	events occur. Finally, from Item 1 of the lemma we have that $\prob_{x_1,x_2
	\sim P}(\max_{i=1,2}\norm{h(x_i) - \hat h(x_i)}_1 > \gamma) \leq 2\gamma$.

	Now let $h \in \cH$ be any classifier that is
	$(\alpha,\beta)$-pairwise-EF on $S$. Since the utilities are in $[0,1]$ and
	$\max_{x=x_i,x_i'} \norm{h(x) - \hat h(x)}_1 \leq \gamma$ for all but a
	$4\gamma$-fraction of the pairs in $S$, we know that $\hat h$ is $(\alpha +
	4\gamma, \beta + 2\gamma)$-pairwise-EF  on $S$. Applying the envy-freeness
	generalization guarantee (Lemma~\ref{lem:finiteGeneralize}) for $\hat \cH$, it follows that $\hat h$ is also
	$(\alpha + 5\gamma, \beta + 2\gamma)$-EF on $P$. Finally, using the
	fact that \[\prob_{x_1,x_2 \sim P}\left(\max_{i=1,2} \norm{h(x_i) - \hat h(x_i)}_1 >
	\gamma\right) \leq 2\gamma,\] it follows that $h$ is $(\alpha + 7\gamma, \beta +
	4\gamma)$-EF on $P$.
\end{proof}

It is worth noting that the (exponentially large) approximation $\hat \cH$ is
only used in the generalization analysis; importantly, an ERM algorithm need not construct
it.

\section{Appendix for Section~\ref{sec:expts}}	\label{app:expts}

Here we describe details of the transformation of the optimization problem from~\eqref{eqn:greedy-opt} to~\eqref{eqn:relaxed-opt}. Firstly, softening constraints of~\eqref{eqn:greedy-opt} with slack variables, we obtain
\begin{align*}
&\min_{g_k \in \cG, \xi \in \mathbb{R}^{n \times n}_{\geq 0}} \quad \sum_{i=1}^n L(x_i, g_k(x_i)) + \lambda \sum_{i \neq j} \xi_{ij}\notag\\
&\qquad\text{s.t.} \quad USF_{ii}^{(k-1)} + \tilde{\weights}_k u(x_i, g_k(x_i)) \geq USF_{ij}^{(k-1)} + \tilde{\weights}_k u(x_i, g_k(x_j)) - \xi_{ij} \quad \forall (i,j).
\end{align*}
Here, $\xi_{ij}$ basically captures how much $i$ envies $j$ under the selected assignments (note that, $\xi_{ij}$ is $0$ if the pair is non-envious, so that the algorithm does not go increasing negative envy at the cost of positive envy for someone else).
Plugging in optimal values of the slack variables, we obtain
\begin{align}	\label{eqn:softened-opt}
&\min_{g_k \in \cG} \quad \sum_{i=1}^n L(x_i, g_k(x_i)) \notag\\
&\qquad\quad+ \lambda \sum_{i \neq j} \max\left(USF_{ij}^{(k-1)} + \tilde{\weights}_k u(x_i, g_k(x_j)) - USF_{ii}^{(k-1)} - \tilde{\weights}_k u(x_i, g_k(x_i)), 0\right).
\end{align}
Next, we perform convex relaxation of different components of this objective function. For this, let's observe the term $L(x_i, g_k(x_i))$. And, let $\vec{w}$ denote the parameters of $g_k$. By definition, we have
$$w_{g_k(x_i)}^\tp x_i \geq w_{y'}^\tp x_i$$
for any $y' \in \cY$. This implies that
\begin{align*}
L(x_i, g_k(x_i)) &\leq L(x_i, g_k(x_i)) + w_{g_k(x_i)}^\tp x_i - w_{y'}^\tp x_i\\
&\leq \max_{y \in \cY}\left\{L(x_i, y) + w_{y}^\tp x_i - w_{y'}^\tp x_i\right\},
\end{align*}
giving us a convex upper bound on the loss $L(x_i, g_k(x_i))$. As this holds for any $y' \in \cY$, we choose $y' = y_i$ as defined in the main body, since it leads to the lowest achievable loss value. Therefore, we have
$$L(x_i, g_k(x_i)) \leq \max_{y \in \cY}\left\{L(x_i, y) + w_{y}^\tp x_i - w_{y_i}^\tp x_i\right\}.$$
This right hand side is basically an upper bound which apart from encouraging $\vec{w}$ to have the highest dot product with $x_i$ at $y_i$, also penalizes if the margin by which this is higher is not enough (where the margin depends on other losses $L(x_i, y)$). This surrogate loss is very similar to multi-class support vector machines. We perform similar relaxations for the other two components of the objective function. In particular, for the $u(x_i, g_k(x_i))$ term, we have
$$-u(x_i, g_k(x_i)) \leq \max_{y \in \cY}\left\{-u(x_i, y) + w_y^\tp x_i - w_{b_i}^\tp x_i\right\},$$
where $b_i$ is as defined in the main body. Finally, for the remaining term, we have
$$u(x_i, g_k(x_j)) \leq \max_{y \in \cY}\left\{u(x_i, y) + w_y^\tp x_j - w_{s_i}^\tp x_j\right\},$$
where $s_i$ is as defined in the main body\footnote{Note that, instead of using $s_i$, an alternative to use in this equation is $b_j$. In particular, for a pair $(i,j)$, using $s_i$ encourages the assignment to give $i$ their favorite outcome while $j$ the outcome that $i$ likes the least (and hence causing $i$ to envy $j$ as less as possible), while using $b_j$ encourages the assignment to give both $i$ and $j$ their favorite outcomes (pushing the assignment to just give everyone their favorite outcomes).}. On plugging in the convex surrogates of all three terms in Equation~\eqref{eqn:softened-opt}, we obtain the optimization problem~\eqref{eqn:relaxed-opt}.

\end{document}